\newcommand{\cmark}{\ding{51}}
\newcommand{\xmark}{\text{\ding{55}}} % Keep the \pdfinfo as shown here. There's no need
\title{\textsc{GraphOracle}: Efficient Fully-Inductive Knowledge Graph Reasoning via Relation-Dependency Graphs}
\author{
Enjun Du,
Siyi Liu,
Yongqi Zhang\thanks{~Corresponding author}
}
\begin{document}

\maketitle

\begin{abstract}
Knowledge graph reasoning in the fully-inductive setting—where both entities and relations at test time are unseen during training—remains an open challenge. In this work, we introduce \textsc{GraphOracle}, a novel framework that achieves robust fully-inductive reasoning by transforming each knowledge graph into a Relation-Dependency Graph (RDG). The RDG encodes directed precedence links between relations, capturing essential compositional patterns while drastically reducing graph density. Conditioned on a query relation, a multi-head attention mechanism propagates information over the RDG to produce context-aware relation embeddings. These embeddings then guide a second GNN to perform inductive message passing over the original knowledge graph, enabling prediction on entirely new entities and relations. Comprehensive experiments on 60 benchmarks demonstrate that \textsc{GraphOracle} outperforms prior methods by up to 25\% in fully-inductive and 28\% in cross-domain scenarios. Our analysis further confirms that the compact RDG structure and attention-based propagation are key to efficient and accurate generalization
\end{abstract}

\section{Introduction}

Knowledge graphs (KGs) encode structured knowledge as entity–relation–entity triples, serving as the backbone for scientific discovery, web-scale reasoning, and intelligent systems~\cite{Cai2025FusionKGLLM,Bai2025AutoSchemaKG,Dong2025GLMTripleGen,guo2025decouplingcontinualsemanticsegmentation,lin2025seagentselfevolutiontrajectoryoptimization,du2025graphmaster}. The central challenge in KG reasoning is \emph{link prediction}: given an incomplete graph and a query $(h, r, ?)$, predict the missing tail entity $t$~\cite{Du2025Mixture}. Inductive KG reasoning requires models to generalize to facts that were not explicitly observed during training. In the most challenging scenarios, models must perform \textbf{fully-inductive}~\cite{zhang2025trix,cui2024promptkg} reasoning—handling both unseen entities and relations during inference. \textbf{Cross-domain} generalization~\cite{wang2025mdgfm,hong2025stability,lin2025unifiedgnn} further requires reasoning over entirely different knowledge graphs containing 100\% novel entities and relations. Both settings pose a fundamental \emph{compositional generalization} challenge: models must recombine learned relational patterns to infer new facts in completely unfamiliar contexts.

Recent approaches to fully-inductive reasoning, including INGRAM~\cite{lee2023ingram} and ULTRA~\cite{galkin2024ultra}, attempt to address this challenge by constructing auxiliary relation graphs $G_R = (R, E_R)$ that capture transferable relational patterns. However, as knowledge graphs scale, this paradigm exposes two fundamental limitations that severely hinder their effectiveness in real-world applications. (1) These methods rely on co-occurrence statistics to connect relations, creating dense graphs with $|E_R| = \Theta(|R|^2)$ edges that inflate computational costs to $O(|R|^3 \cdot H)$ for $H$-layer message passing. The proliferation of spurious connections obscures meaningful compositional signals, while symmetric treatment of relation pairs erases the inherent directionality of logical composition. (2) Current models compress each relation into a single fixed embedding, forcing individual representations to capture diverse semantic roles across vastly different query contexts. For instance, the relation "associated with" may connect proteins to diseases in biomedical contexts but link authors to topics in academic graphs, yet existing methods use the same representation regardless of context.

These observations raise a critical research question: \emph{How to design a KG reasoning framework that captures meaningful relational dependencies while maintaining computational efficiency?} This requires moving beyond dense co-occurrence graphs to a sparse, directed structure that preserves compositional patterns, and replacing fixed relation embeddings with dynamic, query-dependent representations.

%These observations suggest that effective fully-inductive reasoning demands a fundamentally different approach--one that captures the sparse, directed nature of meaningful relational dependencies while avoiding the computational and representational bottlenecks of dense relation graphs. Rather than modeling all possible relation pairs, we should focus on the inherent precedence structure among relations, preserving compositional directionality while enabling context-sensitive representation. This motivates the need for a new dependency-aware design that maintains structural sparsity through directed precedence relationships, respects compositional asymmetries via directed acyclic structures, and enables dynamic relation assembly based on query-specific contexts.

To realize this design, we propose \textbf{\textsc{GraphOracle}}, a relation-centric framework that transforms entity–relation interactions into a compact \textit{Relation-Dependency Graph (RDG)}. Unlike INGRAM and ULTRA, which generate excessive connections, our RDG retains only meaningful directed precedence links, yielding significantly fewer edges yet richer compositional patterns, with its effectiveness remaining stable as the number of relations increases. To capture relation dependencies for specific query, we design a multi-head attention mechanism that recursively propagates information over the RDG, dynamically assembling relation recipes conditioned on the query. By pre-training on four KGs in the general domain, \textsc{GraphOracle} only needs minimal finetune to achieve exceptional adaptability across transductive, inductive, and cross-domain reasoning tasks, improving performance by over 16.8\% on average compared to state-of-the-art methods. Our key contributions in this work can be summarized as follows:

\begin{itemize}[leftmargin=*]
    \item We introduce \textbf{\textsc{GraphOracle}}, a relation-centric foundation model that converts knowledge graphs into RDGs, explicitly encoding compositional patterns while reducing the number of edges on the relation graph compared to prior approaches.

    \item We develop a query-dependent multi-head attention mechanism that dynamically propagates information over the RDG, yielding domain-invariant relation embeddings that enable generalization to unseen entities, relations, and graphs.

    \item Extensive experiments across 60 benchmarks show that \textsc{GraphOracle} consistently outperforms SOTA methods, with particularly strong results in both fully-inductive and cross-domain settings, demonstrating its robustness and generalization capability in challenging scenarios.
\end{itemize}

\section{Related Works}

\subsubsection{Knowledge Graph Reasoning}
A Knowledge Graph (KG) consists of sets of entities $\mathcal{V}$, relations $\mathcal{R}$, and fact triples $\mathcal{F} \subseteq (\mathcal{V} \times \mathcal{R} \times \mathcal{V})$ as $\mathcal{G} = (\mathcal{V}, \mathcal{R}, \mathcal{F})$. 
$(e_q, r_q, e_a)$ is a triple in KG where $e_q, e_a \in \mathcal{V}$ and $r_{q} \in \mathcal{R}$. Knowledge graph reasoning encompasses several increasingly challenging settings based on what information is available during training versus inference. In the \textbf{transductive} setting, both entities and relations remain fixed: $(\mathcal{V}_{tra}=\mathcal{V}_{inf}) \land (\mathcal{R}_{tra}=\mathcal{R}_{inf})$. This allows models to learn fixed embeddings for all components. The \textbf{entity-inductive} setting introduces unseen entities at inference while keeping relations fixed: $(\mathcal{V}_{tra}\neq \mathcal{V}_{inf}) \land (\mathcal{R}_{tra} = \mathcal{R}_{inf})$. Most challenging is the \textbf{fully-inductive} setting where both entities and relations are novel: $(\mathcal{V}_{tra}\neq\mathcal{V}_{inf})\land(\mathcal{R}_{tra} \neq \mathcal{R}_{inf})$. Beyond these, \textbf{cross-domain} reasoning requires transferring to entirely different knowledge graphs with no shared entities or relations, demanding the most robust generalization capabilities.

\subsection{Transductive Reasoning}
Transductive methods assume all entities and relations at inference have been seen during training, enabling the use of fixed relation embeddings. Models like ConvE~\cite{ConvE}, RotatE~\cite{RotatE} and DuASE~\cite{li2024duase} learn low-dimensional relation and entity embeddings directly, while GNN variants such as R-GCN~\cite{schlichtkrull2018modeling} implement relation-specific message passing that effectively parameterizing relation influence via learned embedding-like transformations. These embedding-based approaches form strong baselines but fundamentally cannot generalize beyond their training vocabulary.

\subsection{Entity Inductive Reasoning}
Entity inductive KG reasoning relaxes the entity constraint while maintaining fixed relation embeddings. Early solutions leveraged auxiliary cues---text descriptions in content-masking models~\cite{shi2018open} or ontological features in OntoZSL~\cite{geng2021ontozsl}---and symbolic rule learners such as AMIE~\cite{galarraga2013amie} and NeuralLP~\cite{yang2017neural}. More recent approaches like DRUM~\cite{DRUM} employ differentiable rule chaining, while RLogic~\cite{RLogic} uses symbolic rule matching. SOTA GNN-based methods including GraIL~\cite{teru2020inductive}, PathCon~\cite{wang2021pathcon}, NBFNet~\cite{zhu2021nbfnet}, RED-GNN~\cite{Zhang2022RelationalDigraph}, A*Net~\cite{Zhu2023AStarNet} and AdaProp~\cite{Zhang2023AdaProp} propagate messages along relational paths to accommodate new entities---yet they still rely on fixed relation embeddings, limiting their applicability to scenarios with novel relations.

\subsection{Fully-Inductive Reasoning}
Fully-inductive settings demand handling both unseen entities and relations, requiring explicit relation graph structures. RMPI~\cite{geng2023relational} and INGRAM~\cite{lee2023ingram} pioneer this direction by constructing undirected relation graphs; however, RMPI is limited to subgraph extraction, and INGRAM's degree discretization hampers transfer across graphs with different relation distributions. ISDEA~\cite{gao2023double} and MTDEA~\cite{zhou2023ood} adopt double-equivariant GNNs, but their computational overhead restricts scalability. ULTRA~\cite{galkin2024ultra} advances this with interaction-conditioned relation graphs that adapt based on query context. TRIX~\cite{zhang2025trix} introduces expressive adjacency motifs for richer relation modeling, while KG-ICL~\cite{cui2024kgicl} employs prompt-based relation graphs.

\setlength{\tabcolsep}{1pt}
\begin{table}[!h]
\centering
\scriptsize
\begin{tabular}{l|@{\,}c@{\,}c@{\,}c p{3.3cm}}\toprule
Method & Ent-Ind. & Full-Ind. & Cross-Dom. & Relation Representation \\
\midrule
RotatE \& DuASE      & \xmark & \xmark & \xmark & Relation Embedding \\
A*Net \& AdaProp       & \cmark & \xmark & \xmark & Relation Embedding \\
DRUM       & \cmark & \xmark & \xmark & Differentiable rule chaining \\
RLogic       & \cmark & \xmark & \xmark & Symbolic rule matching \\
INGRAM      & \cmark & \cmark & \xmark & Undirected RG \\
ULTRA       & \cmark & \cmark & \xmark & Interaction-Conditioned RG \\
TRIX        & \cmark & \cmark & \xmark & Expressive Adjacency Motifs RG \\
KG-ICL      & \cmark & \cmark & \xmark & Prompt RG \\
\hline
\textsc{GraphOracle} & \cmark & \cmark & \cmark & Relation-Dependency Graph \\
\bottomrule
\end{tabular}
\caption{Comparison of inductive capabilities and relation representations. ``RG'' is short for ``relation graph''.}
\label{tab:indu_reletion}
\end{table}

\subsection{Cross-domain Reasoning}
Cross-domain KG reasoning represents the frontier of generalization, transferring patterns to graphs with entirely new entities and relations. Early work relied on domain-agnostic logical rules; recent advances leverage pre-trained graph foundation models. MDGFM~\cite{wang2025mdgfm} introduces multi-domain contrastive pre-training, while SAMGPT~\cite{zhang2025samgpt} demonstrates strong transfer without textual signals. Stability-GNN~\cite{hong2025stability} addresses structural shift through adversarial perturbations, and UnifiedGNN~\cite{lin2025unifiedgnn} jointly handles multiple inductive settings via relation adapters. RiemannGFM~\cite{liu2025riemanngfm} incorporates geometric regularization, while Text-Free MDGPT~\cite{li2025mfgpt} and GraphMFM~\cite{cheng2024graphmfm} scale cross-domain pre-training through modality-agnostic masked modeling.

As summarized in Table~\ref{tab:indu_reletion}, KG reasoning methods progress from fixed relation embeddings (transductive) to rule-based reasoning (entity-inductive) to explicit relation graphs (fully-inductive). While existing fully-inductive methods construct undirected or interaction-conditioned graphs, they remain limited to single-domain scenarios. \textsc{GraphOracle} uniquely introduces directed relation-dependency graphs that capture compositional patterns, enabling the first successful cross-domain generalization.

\begin{figure*}[t]
  \centering
    \centering
    \includegraphics[width=0.9\textwidth]{./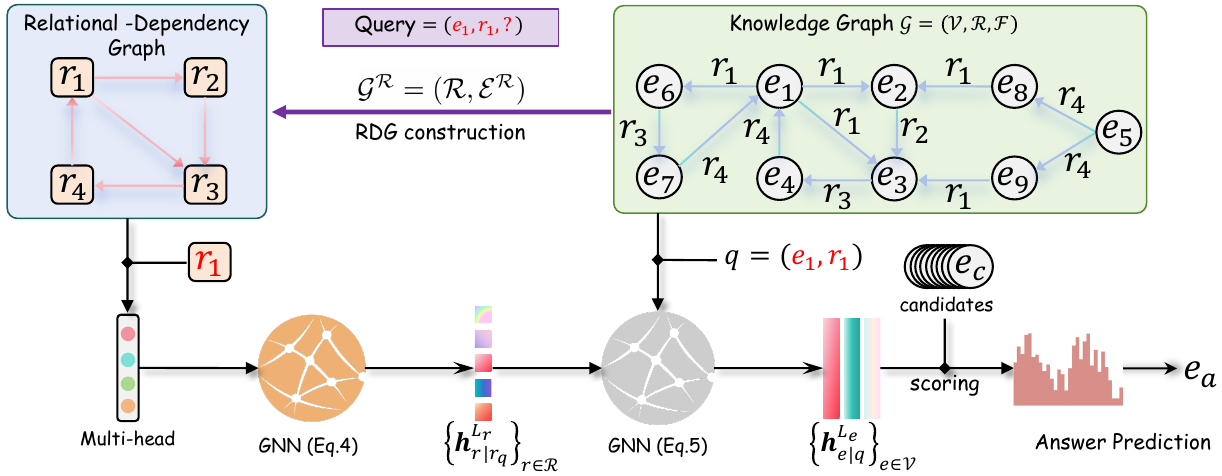}
  %{-6pt}
  \caption{Overview of the \textsc{GraphOracle} process that predicts the answer entity $e_a$ from a given query $(e_1, r_1, ?)$: Given a Knowledge Graph, we first construct the Relation Dependency Graph (RDG). Then, a multi-head attention mechanism combined with a GNN is used to propagate messages among RDG to obtain relation representations $\bm h_{r|r_q}^{L_r}$, which are then used in another GNN for message passing over entity representations. Finally, the candidate entities are scored and evaluated based on the aggregated entity representations,
  and then
  ranked for answer entity prediction.}
  \label{fig:main_figure}
\end{figure*}

\section{Preliminary}

Given a query with missing answer
$(e_q, r_q, ?)$, the goal is to find an answer entity $e_a$ such that $(e_q, r_q, e_a)$ is true. 
Most state-of-the-art models 
leverage GNN to aggregate relational paths
and
can be formulated as the following recursive function, 
where each candidate entity $e_y$ at step $\ell$ accumulates information from its in-neighbors:
\begin{equation}
\bm h_{r_q}^{\ell}(e_q, e_y) = \bigoplus_{(e_x, r, e_y) \in \mathcal{N}(e_y)} \!\! \bm h_{r_q}^{\ell-1}(e_q, e_x) \otimes \phi(r, r_q),
\label{eq:gnn}
\end{equation}
where $\bm h_{r_q}^{0}(e_q, e) = \bm{1}$ if $e = e_q$, and $\bm{0}$ otherwise, $\oplus/\otimes$ are learnable additive and multiplicative operators. $\phi$ encodes relation-type compatibility. After $L$ steps' iteration, the answer is ranked by the score
% \begin{equation}
$
s(e_q, r_q, e_a) = \bm{w}_s^{\top}\bm h_{r_q}^{L}(e_q, e_a)
$.
The learning objective is formulated in a contrastive approach that maximizes the log-likelihood of correct triples in the training set, which amounts to minimizing:

\begin{equation}
\small
\begin{aligned}
\mathcal{L}_{\text{train}} = - \!\!\!\!\!  \!  \!  \!  \! \sum_{(e_q, r_q, e_a) \in \mathcal{F}_{\text{train}}} &\Big[ \log \sigma\big(s(e_q, r_q, e_a)\big) \\
\! +&  \!\!\!\!\!\!\!\!\sum_{e'_n \in \mathcal{N}_{(e_q, r_q, e_a)}}  \!\!\!\!\!\!\!\!\log\big(1 - \sigma(s(e_q, r_q, e'_n))\big)
\Big],
\end{aligned}
\label{eq:loss}
\end{equation}
where $\sigma$ is a sigmoid function.
% \label{eq:score}
% \end{equation}
\textsc{GraphOracle} retains this framework and introduces a Relation-Dependency Graph  pre-training objective that endows $\phi$ with universal semantics, enabling zero-shot generalization to \emph{both} unseen entities and unseen relation vocabularies.

%{-5pt}

\section{The Proposed Method}

%{-5pt}

%{-5pt}

In order to enable fully inductive KG reasoning
and improve the generalization ability of models across KGs,
the key is to generalize the dependencies among relations for different KGs.
To achieve this goal,
we firstly introduce Relational Dependency Graph (RDG),
which explicitly models how relations depend on each other,
in Section~\ref{ssec:rdg}
Based on RDG, we propose a query-dependent multi-head attention mechanism to learn relation representations 
from a weighted combination of precedent relations
in Section~\ref{ssec:relationembed}.
Subsequently, in Section~\ref{ssec:entityembed}, 
we introduce the approach where
entity representations are represented with
the recursive function~\eqref{eq:gnn}
in the original KGs by using the relation representations just obtained.
The overview of our approach is shown in Fig~\ref{fig:main_figure}.

% The key insight driving our approach is that KG reasoning can be fundamentally improved by shifting from entity-centric to relation-centric representations. Traditional methods struggle with cross-domain generalization because they focus primarily on entity interactions rather than the underlying relational structures that transcend specific domains. We introduce \textsc{GraphOracle}, which decomposes this challenge into three critical components: 
% (i) relation-dependency graph construction that explicitly models how relations compositionally interact, 
% (ii) universal relation representation learning that captures domain-invariant relation semantics, 
% and (iii) entity representation learning that leverages these relation representations to perform effective reasoning across heterogeneous KGs.

%{-5pt}
\subsection{RDG Construction}
\label{ssec:rdg}
%{-5pt}

To build an effective KGFM capable of cross KG generalization, 
we must capture the fundamental dependent patterns through which one relation can be represented by a combination of others. 
Our key innovation is reparameterizing entity–relation interactions as a relation-dependency interaction structure that explicitly captures how relations influence each other.

%This isomorphic projection creates a relation-centric topology that preserves the underlying relation‑dependency dependencies while abstracting away entity-specific details. 

Given a KG $\mathcal{G}=(\mathcal{V},\mathcal{R},\mathcal{F})$, 
we construct a RDG $\mathcal{G}^{\mathcal{R}}$ through a
structural transformation. 
First, we define a relation adjacency operator 
$\Phi: \mathcal{F} \rightarrow \mathcal{R} \times \mathcal{R}$
% $\Phi: \mathcal{P}(\mathcal{F}) \rightarrow \mathcal{P}(\mathcal{R} \times \mathcal{R})$ 
that extracts transitive relation dependencies:

\begin{equation*}
\small
\Phi(\mathcal{F})
=\! \! \! \! \! \!\bigcup\limits_{e,e',e'' \in \mathcal{V}}\! \! \!  \!
  \{(r_i,r_j)\mid (e,r_i,e')\in\mathcal{F}\land(e',r_j,e'')\in\mathcal{F}\}.
\end{equation*}

Then, the RDG is defined as $\mathcal G^\mathcal R =(\mathcal{R}, \mathcal{E}^{\mathcal{R}})$,
where 
the node set $\mathcal R$ contains all the relations
and the edge sets
$\mathcal{E}^{\mathcal{R}} = \Phi(\mathcal{F})$ includes relation dependencies induced by entity-mediated pathways.
This transformation alters the 
conceptual framework, 
shifting from an entity-centric perspective to a relation-interaction manifold 
where 
compositional connections between relations become explicit. Each directed edge $(r_i, r_j)$ in $\mathcal{G}^{\mathcal{R}}$ represents a potential relation-dependency pathway, indicating that relation $r_i$ preconditions relation $r_j$ through their sequential interaction over a shared entity context. The edge structure encodes compositional relational semantics, capturing how one relation may influence the probability or applicability of another when they occur in sequence.

To incorporate the 
hierarchical and compositional nature of relation interactions, we define a partial ordering function $\tau: \mathcal{R} \rightarrow \mathbb{R}$ that assigns each relation a position in a relation precedence structure.
This ordering is derived from the KG's inherent structure through rigorous topological analysis of relation co-occurrence patterns and functional dependencies. Relations that serve as logical precursors in inference chains are assigned lower $\tau$ values, thereby establishing a directed acyclic structure in the relation graph that reflects the natural flow of information propagation.
Using this ordering, we define the set of preceding relations for any relation $r_v$ as:
\begin{equation}
\!\!\!\mathcal{N}^{\text{past}}(r_v) \!=\! \{r_u \!\in\! \mathcal{R} \mid \!(r_u, r_v) \!\in\! \mathcal{E}^{\mathcal{R}} \text{ and } \tau(r_u) \!<\! \tau(r_v)\}.\!\!
\label{eq:neighbor}
\end{equation}
This formulation enables us to capture the directional dependency patterns where relations with lower positions in the hierarchy systematically precede and inform relations with higher $\tau$. By explicitly modeling these precedence relationships, our framework can identify and leverage compositional reasoning patterns that remain invariant across domains, enhancing the generalization capabilities.

\subsection{
Relation Representation Learning on RDG}
\label{ssec:relationembed}

Building on the constructed RDG $\mathcal{G}^{\mathcal{R}}$, we develop a representation mechanism that captures the contextualized semantics of relations conditioned on a specific query. 
Given a query relation $r_q$, we introduce an RDG aggregation mechanism to compute $d$-dimensional relation-node representations $\bm{R}_q \in \mathbb{R}^{|\mathcal{R}| \times d}$ conditional on $r_q$.

Following Eq.~\eqref{eq:gnn}.
we apply a labeling initialization 
to distinguish the query relation node $r_q$ in $\mathcal{G}^{\mathcal{R}}$.
Then employ multi-head attention relation‑dependency message passing over the graph:
\begin{equation}
\small
\begin{aligned}
& \bm{h}_{r_v \mid r_q}^0 = \texttt{INDICATOR}_r(r_v, r_q) 
= \delta_{r_v, r_q} \cdot \bm{1}^d, 
\quad r_v \in \mathcal{G}^{\mathcal{R}} \\[0.5em]
& \bm{h}^{\ell}_{r_v \mid r_q} = 
\sigma\Big( \frac{1}{H} \sum_{h=1}^H \Big[ \quad\; \bm{W}_1^{\ell,h} \!\!\!\!\sum_{r_u \in \mathcal{N}^{\text{past}}(r_v)} 
\hat \alpha_{r_u r_v}^{\ell,h} \, \bm{h}^{\ell-1}_{r_u \mid r_q} \\
& \quad \quad  \quad  \quad  \quad   + 
\bm{W}_2^{\ell,h} \, \hat \alpha_{r_v r_v}^{\ell,h} 
\, \bm{h}^{\ell-1}_{r_v \mid r_q}
\Big] \Big),
\end{aligned}
\label{eq:head}
\end{equation}
where $\delta_{r_v,r_q} = 1$ if $v = q$, and $0$ otherwise. 
$H$ is the number of attention heads, 
and $\bm W^{\ell,h}_1, \bm W^{\ell,h}_2\in \mathbb{R}^{d \times d}$ are head-specific parameter matrices.
% and $\bm W^{h^{\ell_r}}_1, \bm W^{h^{\ell_r}}_2  \in \mathbb R^{d \times d}$ are learnable parameters for each head at layer $\ell_r$. 
The relation‑dependency attention weight 
$\hat{ \alpha}_{r_u r_v}^{\ell,h}$ captures the directional influence of relation $r_u$ on relation $r_v$, computed as:
%{-8px}

\begin{equation*}
\hat{ \alpha}_{r_u r_v}^{\ell,h} = \frac
{\exp \big (\bm a^T (\bm W^h_\alpha \bm h_{r_u}^{\ell-1} \| \bm W_\alpha^h \bm h_{r_v}^{\ell-1})\big)}
{\sum_{r_w \in \mathcal{N}^{\text{past}}(r_v)} \exp \big(\bm a^T (\bm W_\alpha^h \bm h_{r_w}^{\ell-1} \| \bm W_\alpha^h \bm h_{r_v}^{\ell-1})\big)},
\label{alg:alpha}
\end{equation*}
where $\bm a \in \mathbb{R}^{2d}$ is a learnable attention parameter vector, $\|$ denotes vector concatenation, and $\bm W^h_\alpha \in \mathbb{R}^{d \times d}$ are head-specific trainable projection matrix. 
The neighborhood function $\mathcal{N}^{\text{past}}(r_v)$ enforces the relation‑dependency ordering of relations as defined in Eq.~\eqref{eq:neighbor}.

After $L_r$ layers of message passing, the final relation representation incorporates both local and higher-order dependencies $\bm R_q = \{\bm h_{r|r_q}^{L_r} \mid r \in \mathcal{R}\}$.

%{-2pt}
\subsection{Entity Representation Learning on the Original KG}
\label{ssec:entityembed}
%{-2pt}

After obtaining the relation representations $\bm R_q$ from RDG
conditioned on $r_q$, we obtain query-dependent entity representations 
by conducting message passing over the original KG structures.
% leverages these dynamically computed relation embeddings instead of learnable parameters. 
This approach enables effective reasoning across both seen and unseen entities and relations.

For a given query $(e_q, r_q, ?)$, we compute entity representations recursively with Eq.~\eqref{eq:gnn} through the KG $\mathcal G$. 
The initial representations
$\bm h^{0}_{e|q}=\bm 1$ 
if $e=e_q$, and otherwise $\bm 0$.
At each layer $\ell$, the representation of an entity $e$ is computed as:
\begin{equation}
\begin{array}{ll}
% &  \bm h^{0}_{e}(e_q, r_q) = \texttt{INDICATOR}_v(e_q,r_q)=\delta_{e_q, r_q} \cdot \bm{1}^d
%    ,    \\[0.5em]
\!\!\!\!\bm h^{\ell}_{e|q} = \delta \!\left( \bm W^{\ell} \!\cdot\! \sum_{(e_s, r, e) \in \mathcal F_\text{train}} \!\alpha^{\ell}_{e_s, r|q} \!\big( \bm h^{\ell-1}_{e_s|q} + \bm h^{L_r}_{r|r_q} \big) \!\right)\!,\!\!
\end{array}
\label{eq:gnnentity}
\end{equation}
where 
$\delta(\cdot)$ is a non-linear activation, and
the attention weight $\alpha^{\ell}_{e_s, r|q}$  is computed as:
\begin{equation*}
\alpha_{e_s,r|q}^\ell = \sigma\Big((\bm w_\alpha^\ell)^\top \text{ReLU}\big(\bm  W_\alpha^\ell \cdot (\bm h^{\ell-1}_{e_s|q} \| \bm h^{L_r}_{r|r_q} \| \bm h^{L_r}_{r_q|r_q})\big)\Big).
\label{eq:alhpa}
\end{equation*}
where 
$\bm w_\alpha^\ell \in \mathbb{R}^d$ and 
$\bm W_{\alpha}^\ell \in \mathbb{R}^{d \times 3d}$ are learnable parameters, $\sigma$ is the sigmoid function and $\cdot$ denotes the standard matrix-vector multiplication.

We iterate Eq.~\eqref{eq:gnnentity}
for $L_e$ steps and use the final layer representation $\bm h_{e|q}^{L_e}$ for scoring each entity $e\in\mathcal V$.
The critical idea here is replacing the learnable relation embeddings $\bm r$ with 
the contextualized relation embedding $\bm h^{L_r}_{r|r_q}$ from our RDG,
enabling fully inductive reasoning (Time complexity  of the \textsc{GraphOracle} model is given in Appendix C,
and the
Theoretical analysis on its expressiveness and generalization is given in Appendix I).

\begin{figure*}[h!]
  \centering
    \centering
    %{-5px}
    \includegraphics[width=\textwidth]{./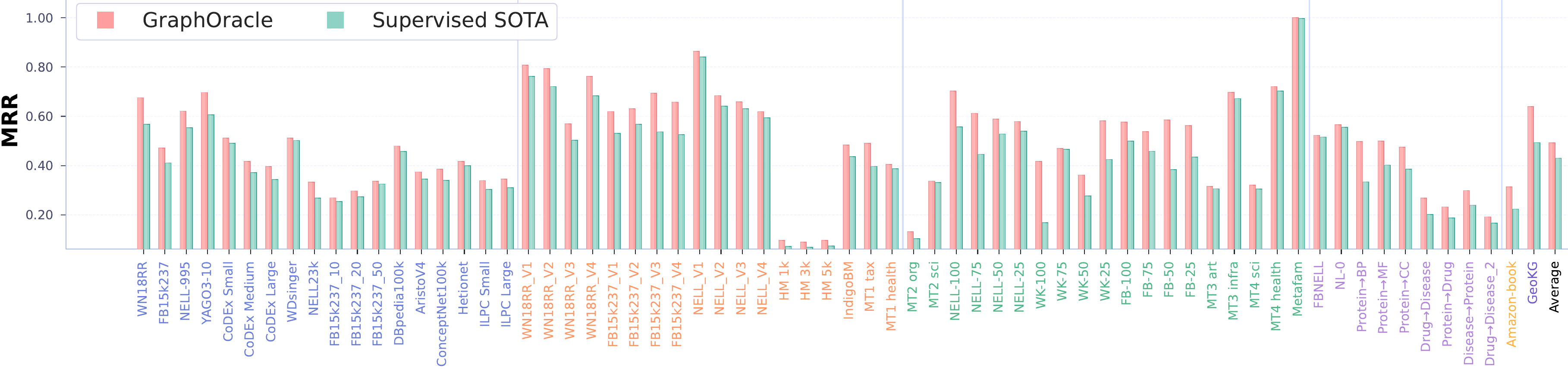}
  %{-15pt}
  \caption{Comparison of the MRR performance (the larger the better) between \textsc{GraphOracle} and supervised SOTA methods across various datasets. 
          Note that Amazon-book uses NDCG@20 due to its adaptation to the recommendation task.}
  %{-7pt}
  \label{fig:main_compare}
\end{figure*}

%{-5pt}

\subsection{Training Details}
%{-5pt}

All the learnable parameters
such as $\bigl\{
\bm W^{h}_{O},\;
\bm W^{\ell,h}_{1},\;
\bm W^{\ell,h}_{2},\;
\bm W^{h}_{\alpha},\;
\bm a,\;
\bm W^{\ell},\;
\bm W^{\ell}_{\alpha},\;
\bm w^{\ell}_{\alpha},\;
\bm w^{L}
\bigr\}$
are trained end-to-end by minimizing the loss function Eq.~\eqref{eq:loss}.
\textsc{GraphOracle} adopts a sequential multi‑dataset pre‑train $\rightarrow$ fine‑tune paradigm to 
acquire a general relation-dependency graph representation across KGs $\{\mathcal{G}_1,\dots,\mathcal{G}_K\}$.  
For each graph $\mathcal{G}_k$, we optimize the regularized objective
$
\mathcal{L}^{(k)}
    = \mathcal{L}_{\text{task}}^{(k)}
    + \lambda_k \,\bigl\lVert\Theta\bigr\rVert_2^{\;2},
$
where $\mathcal{L}_{\text{task}}^{(k)}$ denotes the task‑specific loss on $\mathcal{G}_k$ (e.g., Eq.~\eqref{eq:loss}),  
$\Theta$ represents all learnable parameters, and $\lambda_k$ controls the strength of L2 regularization.
Early stopping technique is used for each graph
once validation MRR fails to improve for several epochs.
% Training on $\mathcal{G}_k$ stops once validation MRR fails to improve for
% $\epsilon$ epochs.
%the final parameters $\Theta_k^\star$ are then transferred to $\mathcal{G}_{k+1}$ through a structure‑preserving map
%\begin{equation}
%\Theta_{k+1}^{(0)}=\mathcal{T}(\Theta_k^\star).
%\end{equation}
This iterative pre‑train process, together with our relation‑dependency graph encoder, equips \textsc{GraphOracle} with strong cross‑domain generalization.
When adapting \textsc{GraphOracle} to new KGs, 
we firstly build the RDG
and then support two inference paradigms:
\begin{itemize}[leftmargin=*]
    \item \textbf{Zero-shot Inference}. 
    The pre-trained model is directly applied to unseen KGs without tuning. 
    % This mode leverages the relation representations learned during pre-training to generalize across domains without additional training.
    \item \textbf{Fine-tuning}. 
    For more challenging domains, 
    we fine-tune the pre-trained parameters
    on the target KG $\mathcal{G}_{\text{target}}$ for a limited number of epochs $E_{\text{fine-tune}} \ll E_{\text{train}}$.
     % During fine-tuning, we adapt all compatible parameters to the target domain while retaining the universal relational structures learned during pre-training. This approach balances preservation of general knowledge with adaptation to domain-specific patterns, enabling robust performance across diverse KGs with minimal additional training.
\end{itemize}

\begin{table*}[htbp]
\centering
\renewcommand{\arraystretch}{1.2}
% \resizebox{\textwidth}{!}{
\small
\setlength{\tabcolsep}{3pt}
\begin{tabular}{@{}lcccccccccccc@{}}
\toprule
\textbf{Model} & \multicolumn{3}{c}{\textbf{Transductive}} & \multicolumn{3}{c}{\textbf{Entity Inductive}} & \multicolumn{3}{c}{\textbf{Fully Inductive}} & \multicolumn{3}{c}{\textbf{Cross-domain}} \\
\cmidrule(lr){2-4} \cmidrule(lr){5-7} \cmidrule(lr){8-10} \cmidrule(lr){11-13}
\textbf{Metric} & MRR & H@1 & H@10 & MRR & H@1 & H@10 & MRR & H@1 & H@10 & MRR & H@1 & H@10 \\
\midrule
Supervised SOTA & 0.4185 & 0.4715 & 0.5771 & 0.4915 & 0.4730 & 0.6296 & 0.4593 & 0.2942 & 0.6246 & 0.2964 & 0.2149 & 0.4458 \\
GraphOracle     & 0.4486 & 0.5550 & 0.6111 & 0.5449 & 0.5684 & 0.6722 & 0.5203 & 0.3688 & 0.7279 & 0.3759 & 0.2744 & 0.5485 \\
Improvement     & 7.19\% & 17.70\% & 5.89\% & 10.86\% & 20.16\% & 6.77\% & 13.28\% & 25.36\% & 16.54\% & 26.82\% & 27.70\% & 23.03\% \\
\bottomrule
\end{tabular}
\caption{Average performance comparison between \textsc{GraphOracle} and Supervised SOTA under four generalization settings.}
\label{tab:improve}
% }
\end{table*}

\begin{figure*}[h]
  \centering
  \begin{minipage}[t]{0.49\textwidth}
    \centering
    \includegraphics[width=\linewidth]{./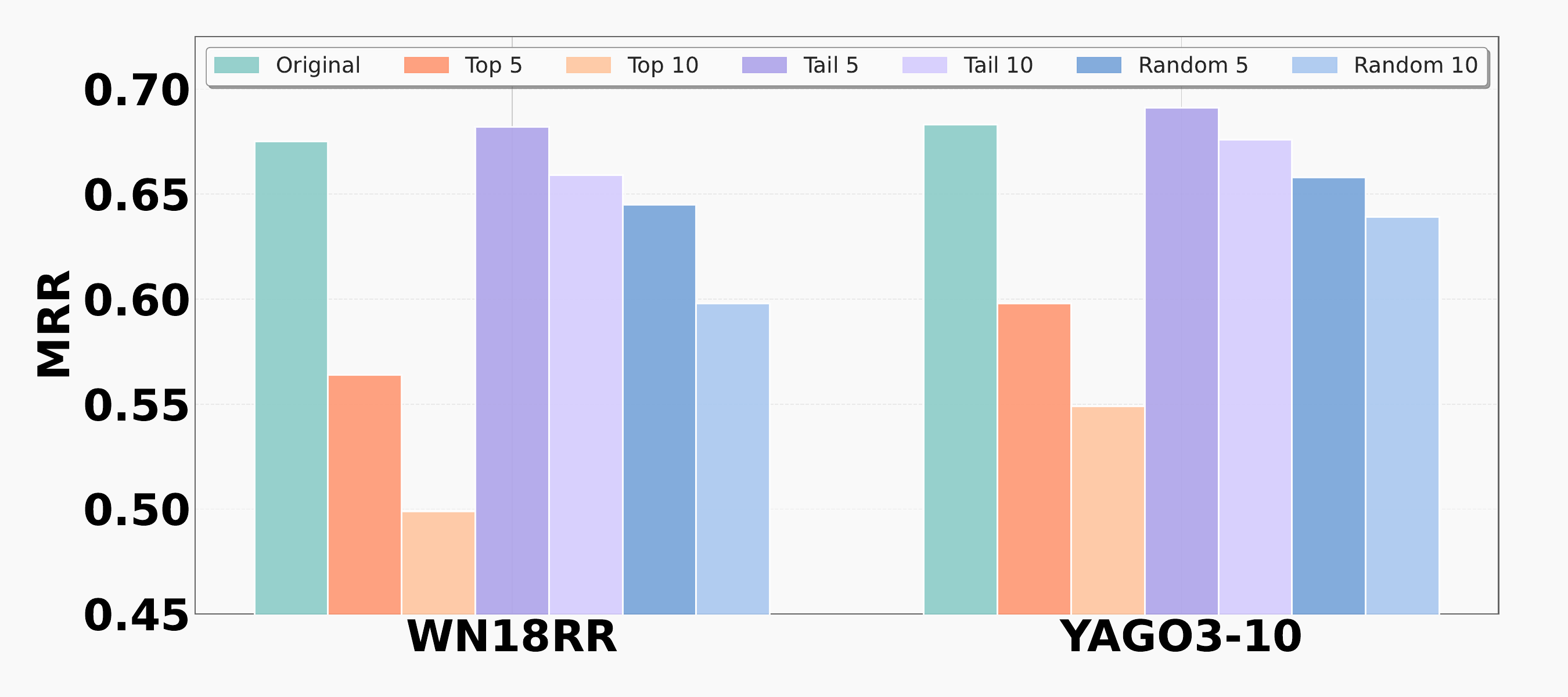}
    \caption{Perturbation Analysis of RDG Edges by Attention-Derived Importance Scores.}
    \label{fig:relation_Explainability}
  \end{minipage}
  \hfill
% \hspace{0.02\textwidth}
  \begin{minipage}[t]{0.49\textwidth}
    \centering
    \includegraphics[width=\linewidth]{./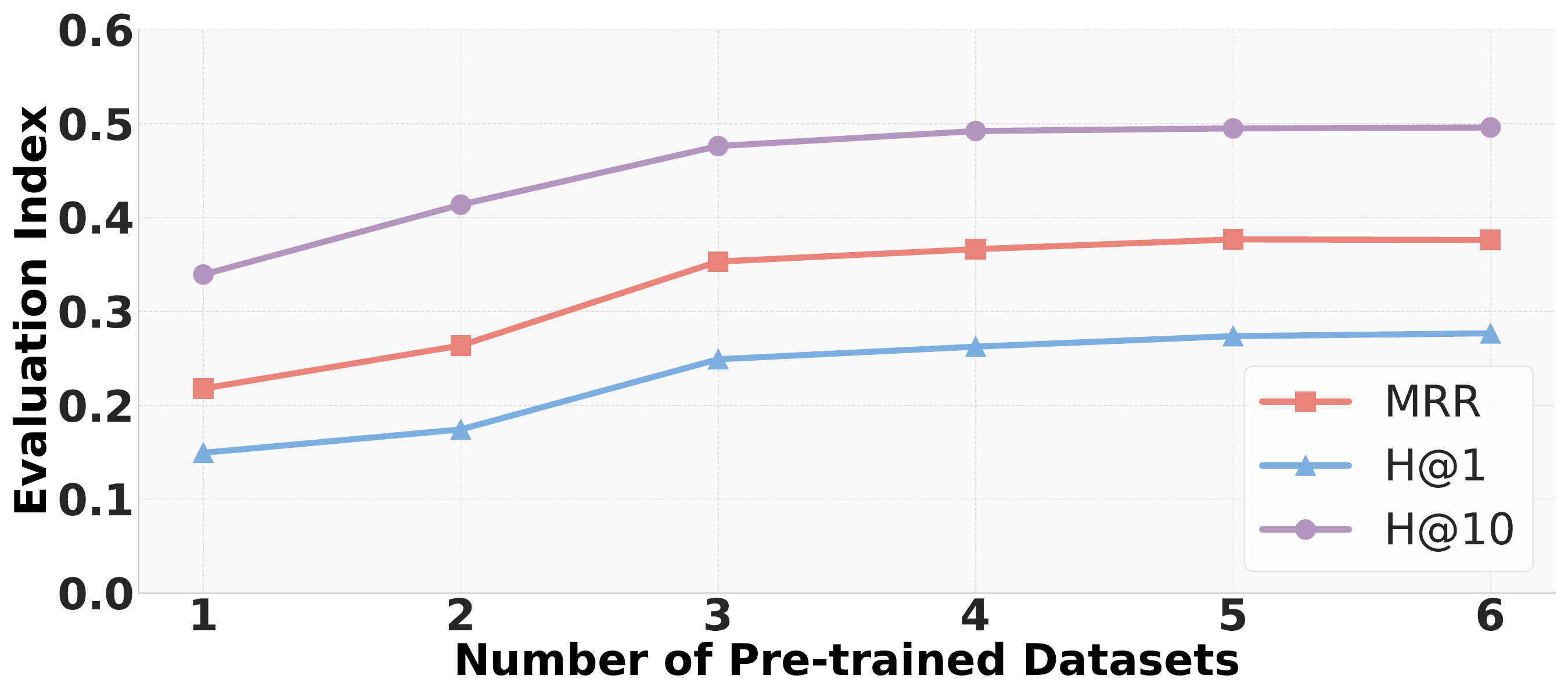}
    % %{-5px}
    \caption{Impact of Number of Pre-trained Datasets on Zero-Shot Evaluation Metrics.}
    \label{fig:ablation_figure}
  \end{minipage}
  
\end{figure*}

%{-5pt}
\section{Experiment}
%{-5pt}

To evaluate the comprehensive capabilities of \textsc{GraphOracle} as a Foundation Model for KG reasoning, we formulate the following research questions:
\textbf{RQ1}:
How does \textsc{GraphOracle} model perform compared with state-of-the-art models  on diverse KGs and cross-domain datasets?
\textbf{RQ2}:
How do different relation-dependency patterns contribute to \textsc{GraphOracle}'s performance?
\textbf{RQ3}: To what extent can external information enhance the performance of \textsc{GraphOracle}?
\textbf{RQ4}: How do the components and configurations contribute to the performance?
% Are all  essential for its performance?

\subsection{Experimental Setup} \label{Experimental Setup}

\subsubsection{Datasets} 
We conduct comprehensive experiments on $60$ KGs, which we classify into three categories according to their properties (Details are given in Appendix D.):
\begin{itemize}[leftmargin=*]
    %\item \textbf{Transductive datasets} with fixed entities and relations: WN18RR \cite{dettmers2017convolutional}, FB15k-237 \cite{toutanova2015observed}, NELL-995 \cite{xiong2017deeppath}, and YAGO3-10 \cite{suchanek2007yago}. 
    
    %\item \textbf{Inductive datasets} with (i) Fixed relations with new entities. Following \cite{TER2020}, we use 12 subsets derived from WN18RR, FB15k237, and NELL-995 datasets. (ii) New relations and entities. Following INGRAM~\cite{lee2023ingram}, we use 12 newly subsets derived from above datasets.
    \item \textbf{Transductive and Inductive datasets.} To ensure fair comparison, we follow the same dataset settings as ULTRA~\cite{galkin2024ultra}, TRIX~\cite{zhang2025trix}, and KG-ICL~\cite{cui2024promptkg}, including 16 transductive, 18 entity-inductive, and 23 fully-inductive datasets, totaling 57 in all.

    \item \textbf{Cross domain datasets.} (i) \textbf{Biomedical Datasets}: We use biomedical KG PrimeKG~\cite{chandak2023building} to examine the cross-domain capabilities of \textsc{GraphOracle}.
    We finetune with 80\% samples in raw PrimeKG, and validate with 10\% samples.
    % we split the raw PrimeKG with 80\%, 10\%, 10\% ramdonly sampled for train, valid and test. 
    When testing on the remaining 10\%, we specially focus on the predictions for triplets: (\textrm{Protein}, \textit{Interacts\_with}, \textrm{BP/MF/CC}), (\textrm{Drug}, \textit{Indication}, \textrm{Disease}), (\textrm{Drug}, \textit{Target}, \textrm{Protein}), (\textrm{Protein}, \textit{Associated\_with}, \textrm{Disease}), (\textrm{Drug}, \textit{Contraindication}, \textrm{Disease}). (ii) \textbf{Recommendation domain}: We transform the Amazon-book~\cite{wang2019kgat} dataset into a pure KG reasoning dataset to adapt to the KGs Reasoning field by defining the interactions between users and items as a new relation in the KG. (iii) \textbf{Geographic datasets} (GeoKG)~\cite{geonames} . (Detail process are given in Appendix F.)
\end{itemize}

\subsubsection{Pretrain and Finetune.} 
\textsc{GraphOracle} is pre-trained on three general KGs (NELL-995, CoDEx-Medium, FB15k-237) to capture diverse relational structures and reasoning patterns.
It takes 150,000 training steps with batch size of 32 using AdamW optimizer~\cite{loshchilov2019decoupled} 
on a single A6000 (48GB) GPU. For cross-domain adaptation, we employ a lightweight fine-tuning approach that updates only the final layer parameters while freezing the pre-trained representations. 
% All fine-tuning experiments are conducted on a single  A6000 GPU, except for cross-domain datasets which require an A100 (80GB) due to their size and complexity. 
The finetune process only takes $1\sim2$ epochs to achieve the best results. The pre-training process takes approximately 36 hours, while fine-tuning requires only $15\sim 60$ minutes depending on the target dataset. Detailed hyperparameters, architecture specifications, and training configurations are provided in Appendix E.

\subsubsection{Baselines} 
We compare the proposed \textsc{GraphOracle} with
(i) \textbf{Transductive:} ConvE \cite{ConvE}, QuatE \cite{QuatE}, DuASE~\cite{li2024duase} and BioBRIDGE~\cite{wang2024biobridge}; 
(ii) \textbf{Entity inductive:} MINERVA \cite{MINERVA}, DRUM \cite{DRUM}, AnyBURL~\cite{meilicke2020reinforced}, RNNLogic \cite{RNNLogic}, RLogic \cite{RLogic} GraphRulRL~\cite{mai2025graphrulrl}, CompGCN \cite{CompGCN}, NBFNet \cite{zhu2021neural}, RED-GNN \cite{RED-GNN}, A*Net \cite{Zhu2023AStarNet}, Adaprop \cite{Adaprop} and one-shot-subgraph \cite{one-shot-subgraph}; 
(iii) \textbf{Fully inductive:} INGRAM~\cite{lee2023ingram}, ULTRA~\cite{galkin2024ultra}, TRIX~\cite{zhang2025trix} and KG-ICL~\cite{cui2024promptkg}. The results for the baseline methods were either directly obtained from the original publications or reproduced using the official source code provided by the authors. Due to page limitations, some other baselines can be found in INGRAM~\cite{lee2023ingram}, BioBRIDGE~\cite{wang2024biobridge} and KUCNet~\cite{liu2024kucnet}.

%{-5pt}
\subsection{Overall Performance (\textbf{RQ1})}
\label{section:rq1}

\begin{figure*}[h!]
  \centering
    \centering
    %{-5px}
    \includegraphics[width=0.83\textwidth]{./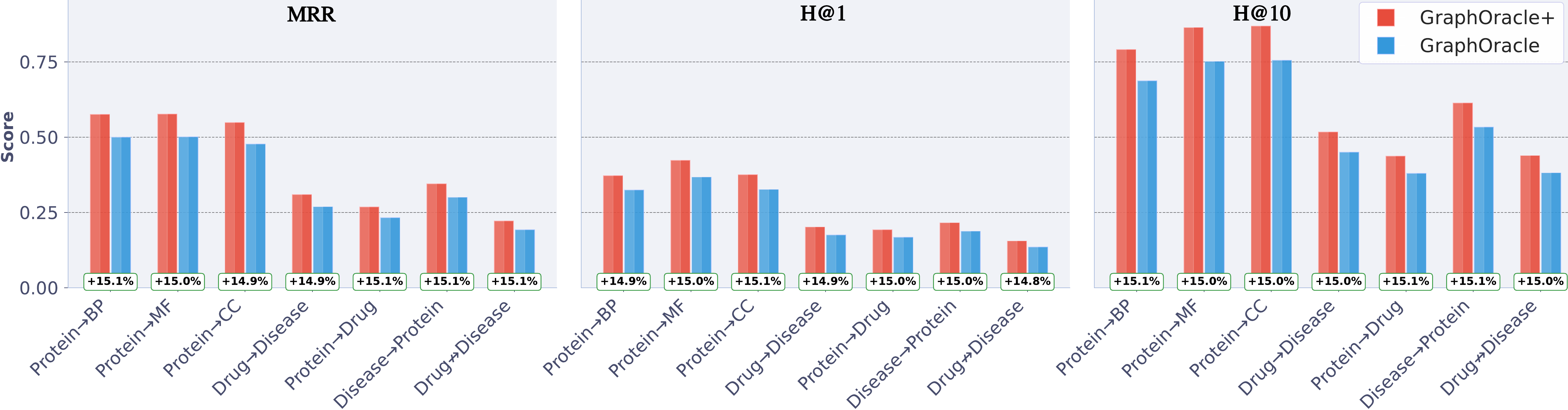}
  %{-15pt}
  \caption{Comparison on PrimeKG: Evaluating \textsc{GraphOracle} Enhanced by External Entity information (\textsc{GraphOracle+}).}
  %{-7pt}
  \label{fig:GraphOracle+}
\end{figure*}

\begin{table*}[h]
\centering

%{-2pt}
\setlength{\tabcolsep}{1mm}
% \resizebox{\textwidth}{!}{
\small
\begin{tabular}{l|ccc|ccc|ccc|ccc|ccc}
\hline
\multirow{2}{*}{Models} 
& \multicolumn{3}{c|}{Nell-100} 
& \multicolumn{3}{c|}{WK-100} 
& \multicolumn{3}{c|}{FB-100} 
& \multicolumn{3}{c|}{YAGO3-10} 
& \multicolumn{3}{c}{GeoKG} \\
& MRR & H@1 & H@10 
& MRR & H@1 & H@10 
& MRR & H@1 & H@10 
& MRR & H@1 & H@10 
& MRR & H@1 & H@10 \\
\hline
\textsc{GraphOracle} & \textbf{0.702} & \textbf{0.623} & \textbf{0.905} 
& \textbf{0.417} & \textbf{0.192} & \textbf{0.711} 
& \textbf{0.576} & \textbf{0.407} & \textbf{0.812} 
& \textbf{0.696} & \textbf{0.672} & \textbf{0.807} 
& \textbf{0.639} & \textbf{0.552} & \textbf{0.793} \\
\hline
W/o RDG & 0.376 & 0.278 & 0.493 
& 0.132 & 0.102 & 0.243 
& 0.237 & 0.149 & 0.432 
& 0.593 & 0.545 & 0.712 
& 0.521 & 0.452 & 0.614 \\
W/o multi-head & 0.598 & 0.534 & 0.817 
& 0.302 & 0.39 & 0.619 
& 0.492 & 0.321 & 0.724 
& 0.629 & 0.576 & 0.722 
& 0.566 & 0.477 & 0.646 \\
\hline
$\text{Graph}_{\text{INGRAM}}$ & 0.478 & 0.375 & 0.663 & 0.189 & 0.092 & 0.389 & 0.398 & 0.283 & 0.557 & 0.478 & 0.512 & 0.625 & 0.485 & 0.471  &  0.613\\
$\text{Graph}_{\text{ULTRA}}$ & 0.548 & 0.490 & 0.720 & 0.201 & 0.105 & 0.466 & 0.465 & 0.297 & 0.679 & 0.587 & 0.583 & 0.725 & 0.545 & 0.491 & 0.645 \\
$\text{Message}_{\text{INGRAM}}$ & 0.517 & 0.426 & 0.726 & 0.271 & 0.111 & 0.518 & 0.428 & 0.289 & 0.635 & 0.539 & 0.473 & 0.674 & 0.496 & 0.481 & 0.624 \\
$\text{Message}_{\text{ULTRA}}$ & 0.569 & 0.502 & 0.741 & 0.282 & 0.124 & 0.597 & 0.468 & 0.305 & 0.672 & 0.503 & 0.539 & 0.669 & 0.457 & 0.479 & 0.636 \\
\hline
\end{tabular}
% }
\caption{Ablation Analysis of \textsc{GraphOracle}'s Core Architectural Components across Five Benchmark Datasets.}
\label{tab:ablation}
\end{table*}

The main experimental results, presented in Fig.~\ref{fig:main_compare}, illustrate the performance of \textsc{GraphOracle} following pre-training on three KGs and a brief fine-tuning phase of just two epochs across 60 distinct datasets (comprehensive results are available in Appendix G). A salient finding is that \textsc{GraphOracle} consistently outperforms the supervised SOTA across all evaluated baseline datasets and metrics, as detailed in Table~\ref{tab:improve}. This robust performance underscores the overall efficacy of our methodology, with particularly notable improvements observed in the more challenging scenarios. The results demonstrate substantial gains across all reasoning types, with the most pronounced improvements occurring in fully inductive and cross-domain settings, where the model must generalize to entirely unseen entities or domains—scenarios that represent the most stringent tests of a model's reasoning capabilities.

\subsection{Relation-Dependency Pattern Analysis (\textbf{RQ2})}
\label{section:rq2}

%{-5pt}
To investigate whether \textsc{GraphOracle} truly internalizes the compositionality of the relations, that is, the way complex relations are constructed systematically from simpler ones, we performed a series of perturbation analyzes on the learned RDG. First, we calculated relation attention weights using Eq.~\eqref{alg:alpha}, averaged over the WN18RR and YAGO3-10 datasets, to assign an \emph{importance score} to each relation pair, reflecting its contribution to downstream reasoning. Subsequently, during inference, we systematically disabled specific subsets of edges based on these attention weights: (i) the top-5 and top-10 most important (highest attention) compositional relation pairs; (ii) the bottom-5 and bottom-10 least important (lowest attention) pairs; and (iii) 5 and 10 randomly selected pairs.

As shown in Fig.~\ref{fig:relation_Explainability}, removing the highly-ranked compositional edges—those encoding key multi-hop templates essential for composing higher-level relations—causes a sharp decline in MRR on both WN18RR and YAGO3-10.  This confirms that \textsc{GraphOracle} heavily relies on these identified compositional pathways for its predictions. Conversely, suppressing a small number of low-importance edges sometimes leads to slight performance improvements, suggesting that these weaker compositional cues might act as semantic noise. Perturbations involving randomly removed edges result in only moderate performance degradation. This underscores the idea that it is not merely the quantity of relations but the specific, learned compositional interactions between them that are crucial for \textsc{GraphOracle}'s reasoning process. These findings collectively substantiate that \textsc{GraphOracle}'s predictions are rooted in the compositional structure of its RDG, rather than relying on isolated relation statistics.

%{-8pt}

\subsection{Compatible with Additional Initial Information (\textbf{RQ3})}
\label{section:rq3}

% To explore whether external information could enhance knowledge graph reasoning capabilities, we developed an improved initialization strategy. We incorporated modality-specific encoded information as initial vectors for entities, replacing the standard random initialization approach previously used. This improved version, denoted as \textbf{\textsc{GraphOracle+}}, leverages foundation model embeddings to create more semantically rich entity representations (Details are given in Appendix~\ref{Updated Methods for Multimodal Reasoning}). Experiments show that \textsc{GraphOracle+} demonstrates consistent performance gains across all evaluation metrics on PrimeKG, improving MRR scores by 15\% for protein-biological process prediction and 14\% for protein-molecular function prediction. These results confirm that our method benefits significantly from the integration of external information. In the current era dominated by large language models, the ability to flexibly incorporate such information is crucial for improving generalization and adaptability, particularly in specialized domains like biomedicine.
To explore the potential of external information in enhancing KG reasoning, we introduced an improved entity initialization strategy. This involved incorporating modality-specific encoded features as initial entity vectors, moving beyond standard random initialization. The resulting model, denoted as \textbf{\textsc{GraphOracle+}}, leverages foundation model embeddings to create more semantically rich entity representations (details are provided in Appendix H). As demostrated in Fig~\ref{fig:GraphOracle+} (details are given in Tabel~\ref{tab:GraphOracle_pro}), experimental evaluations on the PrimeKG dataset show that \textsc{GraphOracle+} achieves consistent performance gains across all metrics. Notably, MRR scores improved by 15\% for protein-biological process prediction and 14\% for protein-molecular function prediction. These results affirm that \textsc{GraphOracle}'s framework significantly benefits from integrating external information. In an era increasingly influenced by large language models, the capacity for flexible incorporation of diverse information sources is crucial for advancing generalization and adaptability, especially within specialized and complex domains such as biomedicine.

%{-8pt}

%{-4pt}
\subsection{Ablation Study (\textbf{RQ4})}

\label{section:rq4}

%{-2pt}
%{-2pt}

To rigorously evaluate the contribution of each architectural component within \textsc{GraphOracle}, we conducted an extensive series of ablation experiments. 
% The efficacy of our pre-training strategy is already corroborated by the learning curves presented in Fig.~\ref{fig:learning_curve}. 
We first investigate the impact of removing the RDG and the effect of reducing the number of attention heads $H$ in Eq.~\eqref{eq:head} from eight to one. The results are detailed in Table~\ref{tab:ablation}. Clearly, eliminating the RDG or the multi‑head attention mechanism causes a marked decline in all evaluation metrics, highlighting their indispensibility to \textsc{GraphOracle}'s performance.
%{-2pt}

In addition, we quantified how the breadth of pre‑training data affects zero‑shot performance. Specifically, we pre‑trained on one to six heterogeneous datasets and evaluated the resulting checkpoints on unseen Table~\ref{tab:ablation}'s graphs. The averaged results, depicted in Fig.~\ref{fig:ablation_figure} (implementation details are reported in Appendix E), reveal that zero-shot performance saturates once three diverse datasets are included in the pre-training mixture. Incorporating additional datasets beyond this point yields no further significant gains. We conjecture that after this point the model has already encountered a sufficiently rich spectrum of relational patterns, and subsequent datasets may introduce largely redundant or potentially noisy signals.

Furthermore, to underscore the unique contributions of our proposed mechanisms, we compared \textsc{GraphOracle}'s relation graph construction and message passing techniques against those employed by INGRAM and ULTRA. For this, we created variants where $\text{Graph}_{\text{INGRAM}}$ denotes using INGRAM's method for relation graph construction,and $\text{Message}_{\text{INGRAM}}$ signifies adopting INGRAM's message passing scheme (similarly for ULTRA). As detailed in Table~\ref{tab:ablation}, substituting either \textsc{GraphOracle}'s graph construction or its message passing method with those from INGRAM or ULTRA resulted in a substantial reduction in performance. This comparative analysis further substantiates the effectiveness and integral role of each distinct component within the \textsc{GraphOracle} framework.

\section{Conclusion}

In this work, we introduced \textbf{\textsc{GraphOracle}}, a relation-centric foundation model for unifying reasoning across heterogeneous KGs. By converting KGs into RDG, our approach explicitly encodes compositional patterns among relations, yielding domain-invariant embeddings. Experiments on 60 diverse benchmarks showed consistent state-of-the-art performance, improving mean reciprocal rank by up to 16.8\% over baselines with minimal adaptation. We also demonstrated that \textsc{GraphOracle}'s performance can be enhanced by integrating external information through \textsc{GraphOracle+}, which leverages foundation model embeddings for improved initialization. Ablation studies confirmed the essential contributions of the relation-dependency graph and multi-head attention components. These findings establish relation-dependency pre-training as a scalable approach toward universal KG reasoning, opening new avenues for cross-domain applications.

\bibliography{aaai2026}

\clearpage

\appendix

\section*{A  Reproducibility and Code} \label{Reproducibility and Code}
To ensure reproducibility, we provide the complete source code of the \textsc{GraphOracle} framework in the supplementary materials.
%\begin{itemize}[leftmargin=*]
%    \item \url{https://anonymous.4open.science/r/GraphOracle-A703/}
%\end{itemize}

\section*{B  Limitations and Future Work} \label{Limitations and Future Work}

Despite \textsc{GraphOracle}'s strong performance across various KG reasoning tasks, several limitations merit acknowledgment. The computational complexity of relation-dependency graph construction scales with the number of relations, which may present challenges for KGs with extremely high relation cardinality, potentially degrading efficiency for graphs with millions of distinct relations. Additionally, our approach currently focuses primarily on the topological structure of relation interactions and may not fully leverage all semantic nuances present in complex domain-specific knowledge, despite partial mitigation through \textsc{GraphOracle+}'s incorporation of external embeddings. Furthermore, while \textsc{GraphOracle} demonstrates strong zero-shot and few-shot capabilities, its performance still benefits from fine-tuning on target domains, indicating that truly universal KG reasoning remains challenging, particularly for highly specialized domains with unique relation structures.

The current work opens several promising directions for future research. Extending \textsc{GraphOracle} to incorporate multimodal knowledge sources represents a compelling direction where future architectures could jointly reason over textual descriptions, visual attributes, and graph structure to create more comprehensive knowledge representations, particularly valuable in domains like biomedicine where protein structures, medical images, and text reports contain complementary information. Additionally, developing temporal extensions to \textsc{GraphOracle} that model relation-dependency dynamics and knowledge evolution patterns would enable reasoning about causality, trends, and temporal dependencies between relations, addressing the static nature of current KGs. As KGs grow to include millions of relations, future research could explore techniques for automatically discovering relation taxonomies and leveraging them to create more efficient and scalable message-passing architectures through hierarchical abstractions of relation-dependencies.

\section*{C Time Complexity Analysis}  \label{Complexity Analysis}
\textsc{GraphOracle}’s overall time complexity consists of two parts: a one‑time preprocessing cost and a per‑query inference cost. Building the Relation–Dependency Graph (RDG) by scanning all triples once requires
$
  \mathcal{O}(|F|),
$
where \(|F|\) is the total number of triples. For each query \((e_q, r_q, ?)\), the $L_R$ layers of relation‑level message passing on the RDG incur
$
  \mathcal{O}(L_R \, |E_R| \, d),
$
where \(|E_R|\) is the number of edges in the RDG and \(d\) is the hidden dimension; in practice \(|E_R|\ll|F|\) and \(L_R\le3\), so this cost is small. Subsequently, the $L_E$ layers of entity‑level message passing propagate representations across an average branching factor \(b\), costing
$
  \mathcal{O}(L_E \, b \, d).
$
With typical settings \(L_E\le3\) and \(b<30\), this yields sub‑millisecond latency per query. Hence the end‑to‑end per‑query complexity is
$
  \mathcal{O}\bigl(L_R\,|E_R|\,d + L_E\,b\,d\bigr),
$
while preprocessing remains \(\mathcal{O}(|F|)\). Thanks to small constant depths and modest branching, \textsc{GraphOracle} achieves near‑linear scalability and memory‑efficient inference on large, heterogeneous knowledge graphs.

\section*{D  Statistics of Datasets} \label{Statistics of Datasets}

We choose the same datasets as ULTRA~\cite{galkin2024ultra},
TRIX~\cite{zhang2025trix} and
KG-ICL~\cite{cui2024promptkg}. Details of the used Knowledge Graph datasets are given in Table~\ref{Table:datasets statistics} and Table~\ref{Table:datasets statistics2}.
Furthermore, we first create three cross-domain datasets for cross-domain Knowledge Graph Reasoning, whose details can be found in Appendix F.

\begin{table*}[htbptpb]
\caption{Statistics of the KG datasets. $Q_{\text{tra}}$, $Q_{\text{val}}$, $Q_{\text{tst}}$ are the query triplets used for reasoning.}
\resizebox{\linewidth}{!}{
\begin{tabular}{ccccccccc}
\hline
Dataset & Reference &  \# Entity & \# Relation & $|\mathcal E|$ & $|Q_{\text{tra}}|$ & $|Q_{\text{val}}|$ & $|Q_{\text{tst}}|$ & Supervised SOTA\\
\hline
WN18RR & Dettmers
et al. (2017)~\cite{dettmers2017convolutional}&40.9k & 11  & 65.1k & 21.7k & 3.0k & 3.1k &one-shot-subgraph \cite{one-shot-subgraph} \\
FB15k237 &Toutanova and Chen (2015)~\cite{toutanova2015observed} & 14.5k & 237 & 204.1k & 68.0k & 17.5k & 20.4k  & Adaprop \cite{Adaprop}\\
NELL-995 & Xiong et al.(2017)~\cite{xiong2017deeppath}&  74.5k & 200  & 112.2k & 37.4k & 543 & 2.8k & Adaprop \cite{Adaprop}\\
YAGO3-10 & Suchanek et al.(2007)~\cite{suchanek2007yago}&123.1k & 37  & 809.2k & 269.7k & 5.0k & 5.0k &one-shot-subgraph \cite{one-shot-subgraph} \\
\hline
Nell-V1 & Teru et al. (2020)~\cite{TER2020}& 3.1k & 14 & 5.5k & 4.7k & 0.4k & 0.4k & KG-ICL~\cite{cui2024promptkg} \\
Nell-V2 & Teru et al. (2020)~\cite{TER2020} & 2.6k & 88 & 10.1k & 8.2k & 90.9k & 1.0k& KG-ICL~\cite{cui2024promptkg} \\
Nell-V3 & Teru et al. (2020)~\cite{TER2020} & 4.6k & 142 & 20.1k & 16.4k & 1.9k & 1.9k & KG-ICL~\cite{cui2024promptkg} \\
Nell-V4 & Teru et al. (2020)~\cite{TER2020} & 2.1k & 76 & 9.3k & 7.5k & 0.9k & 0.9k & KG-ICL~\cite{cui2024promptkg} \\
WN-V1 & Teru et al. (2020)~\cite{TER2020}& 2.7k & 9 & 6.7k & 5.4k & 0.6k & 0.6k & KG-ICL~\cite{cui2024promptkg} \\
WN-V2 & Teru et al. (2020)~\cite{TER2020} & 7.0k & 10 & 20.0k & 15.3k & 1.8k & 1.9k& KG-ICL~\cite{cui2024promptkg} \\
WN-V3 & Teru et al. (2020)~\cite{TER2020} & 12.1k & 11 & 32.2k & 25.9k & 3.1k & 3.2k & KG-ICL~\cite{cui2024promptkg} \\
WN-V4 & Teru et al. (2020)~\cite{TER2020} & 3.9k & 9 & 9.8k & 7.9k & 0.9k & 1.0k & KG-ICL~\cite{cui2024promptkg} \\
FB-V1 & Teru et al. (2020)~\cite{TER2020}& 1.6k & 180 & 5.3k & 4.2k & 0.5k & 0.5k & KG-ICL~\cite{cui2024promptkg} \\
FB-V2 & Teru et al. (2020)~\cite{TER2020} & 2.6k & 200 & 12.1k & 9.7k & 1.2k & 1.2k& KG-ICL~\cite{cui2024promptkg} \\
FB-V3 & Teru et al. (2020)~\cite{TER2020} & 3.7k & 215 & 22.4k & 18.0k & 2.2k & 2.2k & KG-ICL~\cite{cui2024promptkg} \\
FB-V4 & Teru et al. (2020)~\cite{TER2020} & 4.7k & 219 & 33.9k & 27.2k & 3.4k & 3.4k & KG-ICL~\cite{cui2024promptkg} \\
\hline
Nell-25 & Lee et al. (2023)~\cite{lee2023ingram} & 5.2k & 146 & 19.1k & 17.6k & 0.7k & 0.7k & KG-ICL~\cite{cui2024promptkg} \\
Nell-50 & Lee et al. (2023)~\cite{lee2023ingram} & 5.3k & 150 & 19.3k & 17.6k & 0.9k & 0.9k& KG-ICL~\cite{cui2024promptkg} \\
Nell-75 &  Lee et al. (2023)~\cite{lee2023ingram} & 3.3k & 138 & 12.3k & 11.1k & 6.1k & 6.1k & KG-ICL~\cite{cui2024promptkg} \\
Nell-100 &  Lee et al. (2023)~\cite{lee2023ingram} & 2.1k & 99 & 9.4k & 7.8k & 0.8k & 0.8k & KG-ICL~\cite{cui2024promptkg} \\
WK-25 & Lee et al. (2023)~\cite{lee2023ingram} & 13.9k & 67 & 44.1k & 41.9k & 1.1k & 1.1k & KG-ICL~\cite{cui2024promptkg} \\
WK-50 & Lee et al. (2023)~\cite{lee2023ingram} & 16.3k & 102 & 88.9k & 82.5k & 3.2k & 3.2k& KG-ICL~\cite{cui2024promptkg} \\
WK-75 &  Lee et al. (2023)~\cite{lee2023ingram} & 8.1k & 77 & 31.0k & 28.7k & 1.1k & 1.1k & KG-ICL~\cite{cui2024promptkg} \\
WK-100 &  Lee et al. (2023)~\cite{lee2023ingram} & 15.9k & 103 & 58.9k & 49.9k & 4.5k & 4.5k & KG-ICL~\cite{cui2024promptkg} \\
FB-25 & Lee et al. (2023)~\cite{lee2023ingram} & 8.7k & 233 & 103.0k & 91.6k & 5.7k & 5.7k & KG-ICL~\cite{cui2024promptkg} \\
FB-50 & Lee et al. (2023)~\cite{lee2023ingram} & 8.6k & 228 & 93.1k & 85.4k & 3.9k & 3.9k& KG-ICL~\cite{cui2024promptkg} \\
FB-75 &  Lee et al. (2023)~\cite{lee2023ingram} & 6.9k & 213 & 69.0k & 62.8k & 3.1k & 3.1k & KG-ICL~\cite{cui2024promptkg} \\
FB-100 &  Lee et al. (2023)~\cite{lee2023ingram} & 6.5k & 202 & 67.5k & 62.8k & 2.3k & 2.3k & KG-ICL~\cite{cui2024promptkg}\\
\hline

PrimeKG & Chandak et al. (2023)~\cite{chandak2023building} & 85.0k & 14 & 3911.9k & 3129.8k & 391.2k & 391.2k & Adaprop~\cite{Adaprop} \\
Amazon-book & Wang et al. (2019)~\cite{wang2019kgat} & 3404.2k & 40 & 3404.2k & 3210.3k  & 98.0k &  95.9k & KUCNet~\cite{liu2024kucnet} \\
GeoKG & GeoNames Team (2025)~\cite{geonames} & 2054.2k & 681 & 2784.5k & 2673.1k & 55.7k & 55.7k & one-shot-subgraph \cite{one-shot-subgraph} \\

\hline
\end{tabular}

\label{Table:datasets statistics}
}
\end{table*}

\begin{table*}[htbptpb]
\caption{Statistics of the KG datasets. $Q_{\text{tra}}$, $Q_{\text{val}}$, $Q_{\text{tst}}$ are the query triplets used for reasoning.}
\resizebox{\linewidth}{!}{
\begin{tabular}{ccccccccc}
\hline
Dataset & Reference &  \# Entity & \# Relation & $|Q_{\text{tra}}|$ & $|Q_{\text{val}}|$ & $|Q_{\text{tst}}|$ & Supervised SOTA\\
\hline
CoDEx Small & Dettmers et al. (2020) & 2.0k & 42  & 32.9k & 1.8k & 1.8k  &ULTRA~\cite{galkin2024ultra} \\
CoDEx Medium & Dettmers et al. (2020) & 17.1k & 51  & 185.6k & 10.3k & 10.3k  &KG-ICL~\cite{cui2024promptkg} \\
CoDEx Large & Dettmers et al. (2020) & 78.0k & 69  & 551.2k & 30.6k & 30.6k  &KG-ICL~\cite{cui2024promptkg} \\
WDsinger & Lv et al. (2020) & 10.3k & 135  & 16.1k & 2.2k & 2.2k  &TRIX~\cite{zhang2025trix} \\
NELL23k & Lv et al. (2020) & 22.9k & 200  & 25.4k & 5.0k & 5.0k  &KG-ICL~\cite{cui2024promptkg} \\
FB15k237\_10 & Lv et al. (2020) & 11.5k & 237  & 27.2k & 15.6k & 18.2k  &KG-ICL~\cite{cui2024promptkg} \\
FB15k237\_20 & Lv et al. (2020) & 13.2k & 237  & 54.4k & 17.0k & 20.0k  &KG-ICL~\cite{cui2024promptkg} \\
FB15k237\_50 & Lv et al. (2020) & 14.1k & 237  & 136.1k & 17.4k & 20.3k  &ULTRA~\cite{galkin2024ultra} \\
DBpedia100k & Ding et al. (2018) & 99.6k & 470  & 597.6k & 50.0k & 50.0k  &TRIX~\cite{zhang2025trix} \\
AristoV4 & Chen et al. (2021) & 45.0k & 1605  & 242.6k & 20.0k & 20.0k  &TRIX~\cite{zhang2025trix} \\
ConceptNet100k & Malaviya et al. (2020) & 78.3k & 34  & 100.0k & 1.2k & 1.2k  &KG-ICL~\cite{cui2024promptkg} \\
Hetionet & Himmelstein et al. (2017) & 45.2k & 24  & 2025.2k & 112.5k & 112.5k  &ULTRA~\cite{galkin2024ultra} 
\\
\hline

\end{tabular}

\label{Table:datasets statistics2}
}
\end{table*}

\section*{E  Comprehensive Training Details}\label{Training details}

\paragraph{Sequential multi‑dataset schedule.}  
Given $K$ KGs $\{\mathcal{G}_1,\ldots,\mathcal{G}_K\}$ sorted by domain diversity, we train \textsc{GraphOracle} sequentially from $\mathcal{G}_1$ to $\mathcal{G}_K$. Parameters are \emph{rolled over} between datasets to accumulate relational knowledge.

\paragraph{Dataset‑specific hyper‑parameters.}  
Each dataset $\mathcal{G}_k$ employs a tuple  
$\Theta_k=\{\alpha_k,\lambda_k,\gamma_k,d_k^{(\mathrm{h})},d_k^{(\mathrm{a})},\delta_k,\mathcal{A}_k,L_k\}$  
denoting learning rate, $\ell_2$ regularization, decay factor, hidden dimension, attention dimension, dropout rate, activation function, and layer count, respectively. Values are chosen via grid search on the validation split of $\mathcal{G}_k$.

\paragraph{Objective.}  
\begin{align}
\mathcal{L}^{(k)}
      &=\mathcal{L}_{\mathrm{train}}^{(k)}
        +\lambda_k\lVert\Theta\rVert_2^2, \\
\mathcal{L}_{\mathrm{train}}^{(k)}
      &=\mathbb{E}_{(h,r,t)\sim\mathcal{D}_k}
        \bigl[-\log p(t\mid h,r;\Theta)\bigr],
\end{align}
with negative sampling ratio $N_{\text{neg}}=64$.

\paragraph{Learning‑rate decay and early stopping.}  
At epoch $\epsilon$ we apply  
$\alpha_k^{(t+1)}=\gamma_k\!\cdot\!\alpha_k^{(t)}$; training terminates when validation MRR has not improved for $10$ epochs.

\paragraph{Parameter transfer.}  
After convergence on $\mathcal{G}_k$, we initialize the next run via
\begin{equation}
\Theta_{k+1}^{(0)}
      =\mathcal{T}(\Theta_k^\star),
\end{equation}
where $\mathcal{T}$ preserves (i) relation‑dependency graph weights and (ii) shared layer norms, while re‑initializing dataset‑specific embeddings.

\begin{table}[h]
\centering
\caption{Graphs in different pre-training mixtures.}
\small
\begin{tabular}{lccccccc}
\toprule
 Dataset& 1 & 2 & 3 & 4 & 5 & 6\\
\midrule
WN18RR          & \cmark & \cmark & \cmark & \cmark & \cmark & \cmark  \\
CoDEx-Medium     &        & \cmark & \cmark & \cmark & \cmark & \cmark \\
 FB15k237         &        &        & \cmark & \cmark & \cmark & \cmark \\
NELL995          &        &        &        & \cmark & \cmark & \cmark  \\
PrimeKG         &        &        &        & & \cmark & \cmark  \\
GeoKG      &        &        &        &  &  & \cmark \\

\midrule
Batch size       & 50     & 10     & 10     & 5     & 2     & 2     \\
\bottomrule
\end{tabular}
\label{Tab:mix_of_datasets}
\end{table}

\begin{table}[htbptbp]
\centering
\caption{Hyperparameters used across different datasets.}
\resizebox{0.5\textwidth}{!}{
\begin{tabular}{lcccccc}
\hline
\textbf{Datasets} & \textbf{Learning Rate} & \textbf{Act} & \textbf{Entity Layer} & \textbf{Relation Layer} & \textbf{Hidden Dim} & \textbf{Batch Size} \\
\hline
WN18RR      & 0.003  & idd  & 5 & 3 & 64 & 50 \\
FB15k237    & 0.0009 & relu & 4 & 4 & 48 & 10 \\
NELL-995    & 0.0011 & relu & 5 & 4 & 48 &  5 \\
YAGO3-10    & 0.001  & relu & 7 & 4 & 64 &  5 \\
\hline
NELL-100    & 0.0016 & relu & 5 & 3 & 48 & 10 \\
NELL-75     & 0.0013 & relu & 5 & 3 & 48 & 10 \\
NELL-50     & 0.0015 & tanh & 5 & 3 & 48 & 10 \\
NELL-25     & 0.0016 & relu & 5 & 3 & 48 & 10 \\
WK-100      & 0.0027 & relu & 5 & 3 & 48 & 10 \\
WK-75       & 0.0018 & relu & 5 & 3 & 48 & 10 \\
WK-50       & 0.0022 & relu & 5 & 3 & 48 & 10 \\
WK-25       & 0.0023 & idd  & 5 & 3 & 48 & 10 \\
FB-100      & 0.0043 & relu & 5 & 3 & 48 & 10 \\
FB-75       & 0.0037 & relu & 5 & 3 & 48 & 10 \\
FB-50       & 0.0008 & relu & 5 & 3 & 48 & 10 \\
FB-25       & 0.0005 & tanh & 5 & 3 & 16 & 24 \\
\hline
WN-V1       & 0.005  & idd  & 5 & 3 & 64 & 100 \\
WN-V2       & 0.0016 & relu & 5 & 4 & 48 & 20 \\
WN-V3       & 0.0014 & tanh & 5 & 4 & 64 & 20 \\
WN-V4       & 0.006  & relu & 5 & 3 & 32 & 10 \\
FB-V1       & 0.0092 & relu & 5 & 3 & 32 & 20 \\
FB-V2       & 0.0077 & relu & 3 & 3 & 48 & 10 \\
FB-V3       & 0.0006 & relu & 3 & 3 & 48 & 20 \\
FB-V4       & 0.0052 & idd  & 5 & 4 & 48 & 20 \\
NL-V1       & 0.0021 & relu & 5 & 3 & 48 & 10 \\
NL-V2       & 0.0075 & relu & 3 & 3 & 48 & 100 \\
NL-V3       & 0.0008 & relu & 3 & 3 & 16 & 10 \\
NL-V4       & 0.0005 & tanh & 5 & 4 & 16 & 20 \\
\hline
PrimeKG     & 0.00016& relu & 5 & 4 & 16 &  2 \\
Amazon-book & 0.0002 & idd  & 5 & 3 & 32 &  3 \\
GeoKG       & 0.0005 & relu & 4 & 3 & 16 &  2 \\
\hline
\end{tabular}
}
\label{tab:hyperparameters}
\end{table}

We perform a grid search and use the Optuna library to search for the optimal hyperparameters. Table~\ref{tab:hyperparameters} presents the choices of hyperparameters on all datasets, and Table~\ref{Tab:mix_of_datasets} presents the choices of datasets in the pre-training process.

\section*{F  Cross-domain Dataset Processing Details} \label{Training Detail of Cross domain Dataset}

The datasets used in this paper are all open source and can be obtained from:
\begin{itemize}[leftmargin=*]
    \item The biomedical domain datasets are publicly available at \url{https://dataverse.harvard.edu/dataset.xhtml?persistentId=doi:10.7910/DVN/IXA7BM}.
    \item The recommendation domain Amazon-book is avaliable at \url{http://jmcauley.ucsd.edu/data/amazon}.
    \item The geographic datasets (GeoKG) are avaliable at \url{https://www.geonames.org/}.
\end{itemize}

\subsection*{F.1 Processing Details of Biomedical Domain}

\begin{table*}[htbpt]
\centering
\caption{Comparison on PrimeKG. Best performance is highlighted with \textbf{bold}, and the second best is \underline{underlined}. \textsc{GraphOracle}-S means the \textsc{GraphOracle} was trained from scratch, while \textsc{GraphOracle}-F means the \textsc{GraphOracle} was trained by fine-tuning.}
\resizebox{\textwidth}{!}{
\begin{tabular}{lc|ccc|ccc|ccc|ccc|ccc|ccc|ccc}
\hline
\multirow{2}{*}{Type} & \multirow{2}{*}{Method} 
& \multicolumn{3}{c|}{Protein$\rightarrow$BP} 
& \multicolumn{3}{c|}{Protein$\rightarrow$MF} 
& \multicolumn{3}{c|}{Protein$\rightarrow$CC} 
& \multicolumn{3}{c|}{Drug$\rightarrow$Disease} 
& \multicolumn{3}{c|}{Protein$\rightarrow$Drug} 
& \multicolumn{3}{c|}{Disease$\rightarrow$Protein} 
& \multicolumn{3}{c}{Drug$\not\!\rightarrow$Disease} \\ 
 &  & MRR & H@1 & H@10 & MRR & H@1 & H@10 & MRR & H@1 & H@10 & MRR & H@1 & H@10
 & MRR & H@1 & H@10 & MRR & H@1 & H@10 & MRR & H@1 & H@10 \\
\hline
\multirow{8}{*}{Embedding} & TransE & 0.034 & 0.023 & 0.052 & 0.046 & 0.034 & 0.073 & 0.044 & 0.027 & 0.074 & 0.017 & 0.010 & 0.030 & 0.033 & 0.022 & 0.053 & 0.024 & 0.014 & 0.045 & 0.010 & 0.005 & 0.019 \\
 & TransR & 0.045 & 0.030 & 0.068 & 0.060 & 0.044 & 0.095 & 0.048 & 0.030 & 0.081 & 0.053 & 0.032 & 0.093 & 0.069 & 0.046 & 0.112 & 0.028 & 0.016 & 0.052 & 0.029 & 0.015 & 0.055 \\
 & TransSH & 0.044 & 0.029 & 0.067 & 0.061 & 0.045 & 0.096 & 0.057 & 0.035 & 0.096 & 0.026 & 0.016 & 0.046 & 0.043 & 0.028 & 0.070 & 0.024 & 0.014 & 0.045 & 0.014 & 0.007 & 0.026 \\
 & TransD & 0.043 & 0.029 & 0.065 & 0.059 & 0.044 & 0.093 & 0.053 & 0.033 & 0.090 & 0.022 & 0.013 & 0.039 & 0.049 & 0.032 & 0.079 & 0.024 & 0.014 & 0.045 & 0.013 & 0.007 & 0.025 \\
 & ComplEx & 0.084 & 0.056 & 0.128 & 0.100 & 0.074 & 0.158 & 0.099 & 0.061 & 0.167 & 0.042 & 0.025 & 0.074 & 0.079 & 0.052 & 0.128 & 0.059 & 0.034 & 0.110 & 0.048 & 0.025 & 0.091 \\
 & DistMult & 0.054 & 0.036 & 0.082 & 0.089 & 0.066 & 0.141 & 0.095 & 0.059 & 0.161 & 0.025 & 0.015 & 0.044 & 0.044 & 0.029 & 0.071 & 0.033 & 0.019 & 0.062 & 0.047 & 0.025 & 0.089 \\
 & RotatE & 0.079 & 0.053 & 0.120 & 0.119 & 0.088 & 0.188 & 0.107 & 0.066 & 0.181 & 0.150 & 0.090 & 0.264 & 0.125 & 0.083 & 0.203 & 0.070 & 0.041 & 0.131 & 0.076 & 0.040 & 0.144 \\
 & BioBRIDGE & 0.136 & 0.091 & 0.207 & 0.326 & 0.241 & 0.515 & 0.319 & 0.198 & 0.539 & 0.189 & 0.113 & 0.333 & 0.172 & 0.114 & 0.279 & 0.084 & 0.049 & 0.157 & 0.081 & 0.043 & 0.153 \\
\hline
\multirow{9}{*}{GNNs} & NBFNet & 0.279 & 0.187 & 0.424 & 0.335 & 0.248 & 0.529 & 0.321 & 0.199 & 0.543 & 0.169 & 0.101 & 0.297 & 0.156 & 0.103 & 0.253 & 0.200 & 0.116 & 0.374 & 0.139 & 0.074 & 0.263 \\
 & RED-GNN & 0.284 & 0.190 & 0.432 & 0.341 & 0.252 & 0.539 & 0.327 & 0.203 & 0.553 & 0.172 & 0.103 & 0.303 & 0.159 & 0.105 & 0.258 & 0.203 & 0.118 & 0.379 & 0.142 & 0.075 & 0.268 \\
 & A*Net & 0.317 & 0.212 & 0.482 & 0.381 & 0.282 & 0.602 & 0.365 & 0.226 & 0.617 & 0.192 & 0.115 & 0.338 & 0.177 & 0.117 & 0.287 & 0.227 & 0.132 & 0.424 & 0.158 & 0.084 & 0.299 \\
 & AdaProp & 0.334 & 0.224 & \underline{0.508} & 0.402 & 0.297 & 0.635 & 0.385 & 0.239 & \underline{0.651} & \underline{0.202} & \underline{0.121} & \underline{0.356} & 0.187 & 0.124 & 0.303 & 0.239 & 0.139 & 0.447 & 0.167 & 0.089 & 0.316 \\
 & one-shot-subgraph & 0.231 & 0.155 & 0.351 & 0.278 & 0.206 & 0.439 & 0.266 & 0.165 & 0.450 & 0.140 & 0.084 & 0.246 & 0.129 & 0.085 & 0.209 & 0.165 & 0.096 & 0.308 & 0.115 & 0.061 & 0.217 \\
 & INGRAM & 0.269 & 0.180 & 0.409 & 0.324 & 0.240 & 0.512 & 0.310 & 0.192 & 0.524 & 0.163 & 0.098 & 0.287 & 0.151 & 0.100 & 0.245 & 0.193 & 0.112 & 0.361 & 0.134 & 0.071 & 0.253 \\
 & ULTRA & 0.313 & 0.210 & 0.476 & 0.376 & 0.278 & 0.594 & 0.360 & 0.223 & 0.608 & 0.189 & 0.113 & 0.333 & 0.175 & 0.116 & 0.284 & 0.224 & 0.130 & 0.419 & 0.156 & 0.083 & 0.295 \\
 \cline{2-23}
 & \textsc{GraphOracle}-S & \underline{0.392} & \underline{0.251} & 0.500 & \underline{0.423} & \underline{0.314} & \underline{0.698} & \underline{0.431} & \underline{0.263} & 0.692 & \underline{0.223} & \underline{0.132} & \underline{0.397} & \underline{0.197} & \underline{0.139} & \underline{0.334} & \underline{0.262} & \underline{0.154} & \underline{0.487} & \underline{0.176} & \underline{0.103} & \underline{0.345} \\
 &\textsc{GraphOracle}-F & \textbf{0.498} & \textbf{0.323} & \textbf{0.684} & \textbf{0.499} & \textbf{0.366} & \textbf{0.748} & \textbf{0.475} & \textbf{0.325} & \textbf{0.752} & \textbf{0.268} & \textbf{0.175} & \textbf{0.448} & \textbf{0.232} & \textbf{0.167} & \textbf{0.378} & \textbf{0.299} & \textbf{0.187} & \textbf{0.531} & \textbf{0.192} & \textbf{0.135} & \textbf{0.380} \\
\hline
\end{tabular}
}
\label{tab:primekg}
\end{table*}

\subsubsection{Dataset Creation}
\begin{table*}[t]
\caption{Statistical analysis of node and edge distribution in the original PrimeKG dataset and our processed KG used for training. ``Original'' represents the raw PrimeKG data; ``Processed'' indicates our filtered KG; ``Dropped'' shows the number of entities removed during preprocessing.}
\label{tab:kg_statistics}
\centering
\small
\begin{tabular}{llrrrr}
\toprule
% \multicolumn{2}{r}{\textbf{Modality}} & \textbf{Original} & \textbf{Processed} & \textbf{Dropped} & \textbf{Percent dropped} \\
\textbf{Type} & \textbf{Modality} & \textbf{Original} & \textbf{Processed} & \textbf{Dropped} & \textbf{Percent dropped} \\
\midrule
\multirow{7}{*}{Nodes} & biological process & 28,642 & 27,478 & 1,164 & 4.06\% \\ 
 & protein & 27,671 & 19,162 & 8,509 & 30.75\% \\
 & disease & 17,080 & 17,080 & 0 & 0.00\% \\
 & molecular function & 11,169 & 10,966 & 203 & 1.82\% \\
 & drug & 7,957 & 6,948 & 1,009 & 12.68\% \\
 & cellular component & 4,176 & 4,013 & 163 & 3.90\% \\
 & Summation & 96,695 & 85,647 & 11,048 & 11.43\% \\
\midrule
% \multicolumn{2}{c}{\textbf{Relation type}} & \textbf{Original} & \textbf{Processed} & \textbf{Dropped} & \textbf{Percent Dropped} \\
\textbf{Type} & \textbf{Relation} & \textbf{Original} & \textbf{Processed} & \textbf{Dropped} & \textbf{Percent Dropped} \\
\midrule
\multirow{15}{*}{Edges} & drug\_drug & 2,672,628 & 2,241,466 & 431,162 & 16.13\% \\ 
 & protein\_protein & 642,150 & 629,208 & 12,942 & 2.02\% \\
 & bioprocess\_protein & 289,610 & 272,642 & 16,968 & 5.86\% \\
 & cellcomp\_protein & 166,804 & 149,504 & 17,300 & 10.37\% \\
 & disease\_protein & 160,822 & 155,924 & 4,898 & 3.05\% \\
 & molfunc\_protein & 139,060 & 133,522 & 5,538 & 3.98\% \\
 & bioprocess\_bioprocess & 105,772 & 99,630 & 6,142 & 5.81\% \\
 & disease\_disease & 64,388 & 64,388 & 0 & 0.00\% \\
 & contraindication & 61,350 & 60,130 & 1,220 & 1.99\% \\
 & drug\_protein & 51,306 & 47,614 & 3,692 & 7.20\% \\
 & molfunc\_molfunc & 27,148 & 26,436 & 712 & 2.62\% \\
 & indication & 18,776 & 17,578 & 1,198 & 6.38\% \\
 & cellcomp\_cellcomp & 9,690 & 9,200 & 490 & 5.06\% \\
 & off-label use & 5,136 & 4,998 & 138 & 2.69\% \\
 & Summation & 4,414,640 & 3,912,240 & 502,400 & 11.38\% \\
\bottomrule
\label{tab:detail_of_BioKG}
\end{tabular}
\end{table*}

Our research establishes a framework for integrating uni-modal foundation models through KG simplification. As show in Table~\ref{tab:detail_of_BioKG}, for experimental efficiency, we refined the KG to include six key modalities. These retained modalities—protein, disease, drug, and gene ontology terms—represent the core biomedical entities crucial for addressing real-world applications including drug discovery, repurposing, protein-protein interaction analysis, protein function prediction, and drug-target interaction modeling. Table~\ref{tab:kg_statistics} presents a comprehensive comparison between the original and our processed KG.

For classic KG reasoning dataset, we retain only the IDs of entities and relations, while preserving their modalities as auxiliary information.   These modalities are used solely for categorizing relation types during statistical analysis and are not involved in the message passing process. The dataset is partitioned into training, validation, and test sets with a standard split of 80\%, 10\%, and 10\%, respectively.

\subsubsection{External Information-Enrichment Dataset Creation}

The initial PrimeKG dataset aggregates biomedical entities from numerous sources. To enhance the utility of this dataset for our purposes, we enriched entities with essential properties and established connections to external knowledge bases, removing entities lacking required attributes. 
\paragraph{Protein Entities}
The original PrimeKG contains 27,671 protein entries. We implemented a mapping procedure to associate these proteins with UniProtKB/Swiss-Prot sequence database through the UniProt ID mapping service (\url{https://www.uniprot.org/id-mapping}). This procedure yielded 27,478 protein sequences successfully matched with gene identifiers.

Further analysis of the unmapped entries revealed that most corresponded to non-protein-coding genetic elements (including pseudogenes, rRNA, and ncRNA genes), which do not produce functional proteins. Given our focus on protein-centric applications, excluding these entries was appropriate.

\paragraph{Drug Entities}
From the initial 7,957 drug entries in PrimeKG, we performed identity matching against the DrugBank database (\url{https://go.drugbank.com/drugs}). During this process, we removed drugs lacking SMILES structural notation, resulting in 6,948 validated drug entities for our training dataset.

\paragraph{Gene Ontology Terms}
The biological process, molecular function, and cellular component categories comprise the Gene Ontology (GO) terminology in our dataset. We utilized AmiGO to extract detailed definitions of these GO terms through their identifiers (\url{https://amigo.geneontology.org/amigo/search/ontology}). This process allowed us to incorporate 27,478 biological process terms, 10,966 molecular function terms, and 4,013 cellular component terms into our training dataset.

\paragraph{Disease Entities}
Disease descriptions were directly adopted from the PrimeKG dataset, allowing us to retain all 17,080 disease entities without modification for training purposes.

\subsection*{F.2 Processing Detail of Recommendation Domain}

\begin{table*}[htbpt]
\centering
\caption{Comparison of the performance of different methods on Amazon-book. The best performance is marked in \textbf{bold} and the second best performance is \underline{underlined}. The \textsc{GraphOracle}-S means the \textsc{GraphOracle} was train from scratch while \textsc{GraphOracle}-F means the \textsc{GraphOracle} was trained by finetune.}
\resizebox{\textwidth}{!}{
\begin{tabular}{c|cccccccccccccccc}
\hline

Method  & MF & FM & NFM & RippleNet & KGNN-LS & CKAN & KGIN & CKE & R-GCN & KGAT & PPR & PathSim & RED-GNN & KUCNet & \textsc{GraphOracle}-S & \textsc{GraphOracle}-F  \\
\hline
 Recall@20 & 0      & 0.0026 & 0.0006 & 0.0011 & 0.0001 & 0.0005 & 0.0868 & 0 & 0.0001 & 0.0001 & 0.0301 & 0.2053 & 0.2187 & 0.2237 & \underline{0.2453} & \textbf{0.3142}   \\
 NDCG@20 & 0     & 0.0010 & 0.0003 & 0.0005 & 0.0001 & 0.0003 & 0.0446 & 0 & 0.0001 & 0.0001 & 0.0167 & 0.1491 & 0.1633 & 0.1685 & \underline{0.1987} & \textbf{0.2591} \\
\hline
\end{tabular}
}
\label{tab:Amazon-book}
\end{table*}

\begin{algorithm}[ht]
\SetAlgoLined
\SetKwComment{Comment}{\quad// }{}
\caption{KG Enrichment \& Entity Canonicalization}
\KwIn{$\mathcal{R}$ (relation list), $\mathcal{T}_{\text{train}}$ (training triples), $\mathcal{K}_G$ (original KG), $\mathcal{T}_{\text{test}}$ (test triples)}
\KwOut{$\mathcal{R}'$ (extended relations), $\mathcal{K}'_G$ (enriched KG), $\mathcal{M}_E$ (entity--ID map)}

\tcc*[f]{Phase I: Relation Ontology Extension}\;
$\bm{\Omega}\leftarrow\textsc{ReadRelations}(\mathcal{R})$\;
$r_{\max}\leftarrow\max_{r\in\bm{\Omega}}\textsc{id}(r)$\;
$\text{id}_{\text{purchase}}\leftarrow r_{\max}+1$\;
$\mathcal{R}'\leftarrow\bm{\Omega}\cup\{(\text{``purchase''},\text{id}_{\text{purchase}})\}$\;
\textsc{PersistRelationSet}$(\mathcal{R}')$\;

\tcc*[f]{Phase II: Semantic Triple Generation}\;
$\mathcal{P}\leftarrow\emptyset$\;
\ForEach{$\sigma\in\mathcal{T}_{\text{train}}$}{
    $\mathcal{E}\leftarrow\textsc{TokenizeEntities}(\sigma)$\;
    $u\leftarrow\mathcal{E}[0]$\Comment*[r]{user}
    $\mathcal{I}_u\leftarrow\mathcal{E}[1:]$\Comment*[r]{items}
    \ForEach{$i\in\mathcal{I}_u$}{
        $\mathcal{P}\leftarrow\mathcal{P}\cup\{(u,\text{id}_{\text{purchase}},i)\}$\;
    }
}

\tcc*[f]{Phase III: KG Augmentation}\;
$\mathcal{K}_{\text{ori}}\leftarrow\textsc{ExtractTriples}(\mathcal{K}_G)$\;
$\mathcal{K}'_G\leftarrow\mathcal{K}_{\text{ori}}\cup\mathcal{P}$\;
\textsc{PersistEnrichedKG}$(\mathcal{K}'_G)$\;

\tcc*[f]{Phase IV: Entity Canonicalization}\;
$\mathcal{E}_{\text{train}}\leftarrow\{s,o\mid(s,\_,o)\in\mathcal{K}'_G\}$\;
$\mathcal{E}_{\text{test}}\leftarrow\bigcup_{\sigma\in\mathcal{T}_{\text{test}}}\textsc{TokenizeEntities}(\sigma)$\;
$\mathcal{E}_{\text{uni}}\leftarrow\mathcal{E}_{\text{train}}\cup\mathcal{E}_{\text{test}}$\;
$\mathcal{E}_{\text{ord}}\leftarrow\textsc{TopologicalSort}(\mathcal{E}_{\text{uni}})$\;
\For{$j\leftarrow0$ \KwTo $|\mathcal{E}_{\text{ord}}|-1$}{
    $\Phi(\mathcal{E}_{\text{ord}}[j])\leftarrow j$\;
}
$\mathcal{M}_E\leftarrow\{(e,\Phi(e))\mid e\in\mathcal{E}_{\text{ord}}\}$\;
\textsc{PersistEntityMapping}$(\mathcal{M}_E)$\;
\KwRet{$\mathcal{R}',\mathcal{K}'_G,\mathcal{M}_E$}
\end{algorithm}

\textbf{Theoretical Framework and Implementation Principles}

The algorithm presented herein delineates a comprehensive methodology for KG enrichment and entity canonicalization within the recommendation domain. This approach operates through a multi-phase framework that transforms heterogeneous data sources into a unified semantic representation amenable to graph-based recommendation algorithms.

Phase I (\textit{Relation Ontology Extension}) introduces a formal extension of the relational schema $\mathcal{R}$ with a domain-specific "purchase" relation. Let $\bm{\Omega} = \{(r_1, id_1), (r_2, id_2), ..., (r_n, id_n)\}$ represent the initial relation set. The algorithm derives $r_{max} = \max_{r \in \bm{\Omega}}(\text{id}(r))$ and establishes $\text{id}_{purchase} = r_{max} + 1$, thus creating an extended relation ontology $\mathcal{R}' = \bm{\Omega} \cup \{(\text{"purchase"}, \text{id}_{purchase})\}$. This expansion facilitates the semantic representation of user-item interactions within the KG structure.

Phase II (\textit{Semantic Triple Generation}) transforms implicit user-item interactions into explicit RDF-compatible triples. For each user $u \in \mathcal{U}$ and their associated items $\mathcal{I}_u \subseteq \mathcal{I}$, where $\mathcal{U}$ and $\mathcal{I}$ denote the user and item entity spaces respectively, the algorithm constructs a set of purchase triples defined by:
\begin{equation}
    \mathcal{P} = \bigcup_{u \in \mathcal{U}} \bigcup_{i \in \mathcal{I}_u} \{(u, \text{id}_{purchase}, i)\}
\end{equation}
These triples codify the user-item engagement patterns within the formalism of a KG, enabling the integration of collaborative filtering signals with semantic relationships.

Phase III (\textit{KG Augmentation}) implements the fusion of the original KG $\mathcal{K}_{original}$ with the newly derived purchase triples $\mathcal{P}$. The enriched KG $\mathcal{K}'_G$ is formalized as:
\begin{equation}
    \mathcal{K}'_G = \mathcal{K}_{original} \cup \mathcal{P}
\end{equation}
This augmentation creates a multi-relational graph structure that encapsulates both semantic domain knowledge and behavioral interaction patterns.

Phase IV (\textit{Entity Canonicalization}) establishes a unified reference framework for all entities across both training and evaluation datasets. The algorithm constructs the universal entity set $\mathcal{E}_{universal} = \mathcal{E}_{train} \cup \mathcal{E}_{test}$, where $\mathcal{E}_{train}$ comprises entities appearing in $\mathcal{K}'_G$ and $\mathcal{E}_{test}$ consists of entities present in the test dataset. A bijective mapping function $\Phi: \mathcal{E}_{universal} \rightarrow \{0, 1, ..., |\mathcal{E}_{universal}|-1\}$ is implemented to assign canonical integer identifiers to each entity, facilitating efficient indexing and dimensional reduction.

The canonicalization process ensures consistent entity representation across both KG construction and recommendation evaluation, mitigating potential entity alignment issues and optimizing computational efficiency. The resulting entity mapping $\mathcal{M}_E = \{(e, \Phi(e)) | e \in \mathcal{E}_{universal}\}$ enables seamless integration of the KG with neural recommendation architectures that typically require numerical entity representations.

This algorithmic framework yields a semantically enriched KG with standardized entity references, establishing the foundation for knowledge-aware recommendation algorithms that can simultaneously leverage collaborative signals and semantic relationships to generate contextualized and interpretable recommendations.

\subsection*{F.3 Processing Detail of Geographic Datasets (GeoKG)}

\begin{table*}[htbpt]
\centering
\caption{Comparison of the performance of different methods on GeoKG. The best performance is marked in \textbf{bold} and the second best performance is \underline{underlined}. The \textsc{GraphOracle}-S means the \textsc{GraphOracle} was train from scratch while \textsc{GraphOracle}-F means the \textsc{GraphOracle} was trained by finetune.}
\resizebox{\textwidth}{!}{
\begin{tabular}{c|ccccccccccccccc}
\hline

Method  & TransE & TransR & TransSH & TransD & ComplEx & DistMult & RotatE & NBFNet & RED-GNN & A\*Net & AdaProp & one-shot-subgraph & ULTRA & \textsc{GraphOracle}-S & \textsc{GraphOracle}-F  \\
\hline
 MRR&  0.013 & 0.032 & 0.019 & 0.021 & 0.075 & 0.084 & 0.093 & 0.425 & 0.486 & 0.473 & 0.493 & 0.473 & 0.528 & \underline{0.539} & \textbf{0.639 }\\
 H@1& 0.028      & 0.064 & 0.039 & 0.042 & 0.088 & 0.097 & 0.104 & 0.403 & 0.443 & 0.428 & 0.450 & 0.428 & 0.486 & \underline{0.493} & \textbf{0.552}   \\
 H@10& 0.046     & 0.088 & 0.048 & 0.053 &0.103& 0.125 &0.176 & 0.534 & 0.566 & 0.564 & 0.573 & 0.586 & 0.627 & \underline{0.654} & \textbf{0.793} \\
\hline
\end{tabular}
}
\label{tab:GeoKG}
\end{table*}

\begin{algorithm}[ht]
\SetAlgoLined
\SetKwComment{Comment}{\quad// }{}
\caption{Relation-Balanced Pruning \& Partitioning for Geographic KG}
\KwIn{KG $\mathcal{G}=(\mathcal{E},\mathcal{R},\mathcal{T})$, prune ratio $\rho$, visibility ratio $\theta$, split ratios $\boldsymbol{\alpha}=(\alpha_1,\alpha_2,\alpha_3)$, weight $w$}
\KwOut{Splits $\{\mathcal{G}_i\}_{i=1}^3$, entity map $\pi_{\mathcal{E}}$, relation map $\pi_{\mathcal{R}}$}

\tcc*[f]{Phase I: Relation-Aware Pruning}\;
$\mathcal{T}\leftarrow\textsc{Unique}(\mathcal{T})$\;
Group $\mathcal{T}$ by relation: $\mathcal{T}_r$\;
Compute normalized degree $\bar{\bm d}(v)=\bm d(v)\big/\max_{u}\bm d(u)$\;
$\mathcal{T}^{\rho}\leftarrow\emptyset$\;
\ForEach{$r\in\mathcal{R}$}{
    \ForEach{$(h,r,t)\in\mathcal{T}_r$}{
        $\Upsilon(h,r,t)\leftarrow w\cdot\frac{\bar{\bm d}(h)+\bar{\bm d}(t)}{2}$\;
    }
    $\mathcal{T}^{\rho}\leftarrow\mathcal{T}^{\rho}\cup\textsc{Top}_{\rho}(\mathcal{T}_r,\Upsilon)$\;
}
Define pruned KG $\mathcal{G}^{\rho}$ from $(\mathcal{E}^{\rho},\mathcal{R}^{\rho},\mathcal{T}^{\rho})$\;

\tcc*[f]{Phase II: Visibility Partitioning}\;
Randomly split $\mathcal{E}^{\rho}$ and $\mathcal{R}^{\rho}$ into seen/unseen by $\theta$\;
$\mathcal{T}_{\text{train}}\leftarrow$ triples whose $h,t,r$ are all seen\;
$\mathcal{T}_{\text{eval}}\leftarrow\mathcal{T}^{\rho}\setminus\mathcal{T}_{\text{train}}$\;

\tcc*[f]{Phase III: Distribution Enforcement}\;
Target $|\mathcal{T}_{\text{train}}|=\alpha_1|\mathcal{T}^{\rho}|$\;
Balance $\mathcal{T}_{\text{train}}$ and $\mathcal{T}_{\text{eval}}$ via random moves\;
Split $\mathcal{T}_{\text{eval}}$ into validation/test by $(\alpha_2,\alpha_3)$, ensuring disjointness\;

\tcc*[f]{Phase IV: Finalization}\;
Form $\mathcal{G}_1=(\mathcal{E}^{\text{train}},\mathcal{R}^{\text{train}},\mathcal{T}_{\text{train}})$,  
\hspace*{1.5em}$\mathcal{G}_2=(\mathcal{E}^{\text{valid}},\mathcal{R}^{\text{valid}},\mathcal{T}_{\text{valid}})$,  
\hspace*{1.5em}$\mathcal{G}_3=(\mathcal{E}^{\text{test}},\mathcal{R}^{\text{test}},\mathcal{T}_{\text{test}})$\;
Build index maps $\pi_{\mathcal{E}},\pi_{\mathcal{R}}$ by ascending order of IDs\;
\KwRet{$\{\mathcal{G}_i\}_{i=1}^3,\;\pi_{\mathcal{E}},\;\pi_{\mathcal{R}}$}
\end{algorithm}

\subsubsection{Fully-Inductive Geographic KG Dataset Construction.}
We propose a comprehensive framework for constructing, pruning, and partitioning geographic KGs, designed to ensure relation balance, semantic diversity, and full inductiveness while preserving critical structural information. Our framework systematically addresses five major challenges: (i) maintaining relation diversity, (ii) preserving structural integrity, (iii) controlling visibility of entities and relations, (iv) enforcing strict data partitioning, and (v) ensuring data integrity via rigorous duplicate prevention.

The core innovation lies in the \textit{relation-balanced pruning} strategy introduced in Phase I. Instead of applying a global importance metric across all triples, we stratify the pruning process by relation type. For each relation $r \in \mathcal{R}$, we select the top $\rho=7.5\%$ most important triples based on a relation-specific scoring function $\Upsilon_r$, which estimates the structural importance of triples involving entities $h$ and $t$ via:
\begin{equation}
    \Upsilon_r(h,r,t) = w \cdot \frac{\bar{\bm{d}}(h) + \bar{\bm{d}}(t)}{2}
\end{equation}
where $\bar{\bm{d}}(\cdot)$ denotes the normalized degree of an entity. This ensures that the pruned graph $\mathcal{G}^{\rho}$ maintains balanced semantic representation across both frequent and rare relations, avoiding dominance by high-frequency edges.

To guarantee inductiveness, Phase II performs \textit{visibility-controlled partitioning} by randomly assigning 70\% of entities and relations to the training set. All training triples are composed solely of these “seen” elements, while validation and test triples each include at least one “unseen” entity or relation. This design ensures a fully-inductive setup, where no inference triple shares entities or relations with the training set, thereby enabling robust assessment of generalization to entirely new graph components.

Phase III enforces the 80\%-10\%-10\% train-validation-test ratio by adjusting assignments from Phase II when necessary, while strictly ensuring that each triple appears in only one split. Phase IV finalizes the dataset by constructing the three graph partitions and generating sequential ID mappings for all entities and relations.

Overall, our framework yields a semantically diverse, structurally meaningful, and fully-inductive geographic KG dataset that is reduced to 7.5\% of the original size. The relation-aware pruning and controlled visibility mechanisms work in tandem to ensure both data compactness and inductive generalization capability, while robust duplicate handling preserves data integrity throughout the pipeline.

\section*{G  Complete Experimental Results} \label{Complete Experimental Results}

\noindent In this section, we report the comprehensive experimental results of our study.  
Table~\ref{tab:transductive} presents the performance of \textsc{GraphOracle} on transductive benchmarks, while Table~\ref{tab:entity_inductive}, Table~\ref{tab:fully-inductive} and Table~\ref{tab:fully-inductive1} summarize the results on entity-inductive and fully-inductive settings, respectively.  
Cross-domain evaluations are provided in Table~\ref{tab:primekg}, Table~\ref{tab:Amazon-book}, and Table~\ref{tab:GeoKG}.  
Furthermore, Table~\ref{tab:GraphOracle_pro} illustrates the enhanced performance of \textsc{GraphOracle+} when external information is incorporated.  
Across all datasets and evaluation scenarios, \textsc{GraphOracle} consistently outperforms existing baselines by a notable margin, highlighting the robustness and effectiveness of our proposed approach.

\begin{table*}[htbpt]
\centering
\caption{Comparison of \textsc{GraphOracle} with other reasoning methods in the transductive setting. Best performance is indicated by the \textbf{bold} face numbers, and the \underline{underline} means the second best. ``--'' means unavailable results} 
\resizebox{\textwidth}{!}{
\begin{tabular}{lc|ccc|ccc|ccc|ccc} 
\hline
\multirow{2}{*}{Type} & \multirow{2}{*}{Model} & \multicolumn{3}{c|}{WN18RR} & \multicolumn{3}{c|}{FB15k237} & \multicolumn{3}{c|}{NELL-995} & \multicolumn{3}{c}{YAGO3-10} \\
 & & MRR & H@1 & H@10 & MRR & H@1 & H@10 & MRR & H@1 & H@10 & MRR & H@1 & H@10 \\
\hline
\multirow{10}{*}{Non-GNN}
 & ConvE               & 0.427 & 39.2 & 49.8 & 0.325 & 23.7 & 50.1 & 0.511 & 44.6 & 61.9 & 0.520 & 45.0 & 66.0 \\
 & QuatE               & 0.480 & 44.0 & 55.1 & 0.350 & 25.6 & 53.8 & 0.533 & 46.6 & 64.3 & 0.379 & 30.1 & 53.4 \\
 & RotatE              & 0.477 & 42.8 & 57.1 & 0.337 & 24.1 & 53.3 & 0.508 & 44.8 & 60.8 & 0.495 & 40.2 & 67.0 \\
 & MINERVA             & 0.448 & 41.3 & 51.3 & 0.293 & 21.7 & 45.6 & 0.513 & 41.3 & 63.7 & --     & --    & --    \\
 & DRUM                & 0.486 & 42.5 & 58.6 & 0.343 & 25.5 & 51.6 & 0.532 & 46.0 & 66.2 & 0.531 & 45.3 & 67.6 \\
 & AnyBURL             & 0.471 & 44.1 & 55.2 & 0.301 & 20.9 & 47.3 & 0.398 & 27.6 & 45.4 & 0.542 & 47.7 & 67.3 \\
 & RNNLogic            & 0.483 & 44.6 & 55.8 & 0.344 & 25.2 & 53.0 & 0.416 & 36.3 & 47.8 & 0.554 & 50.9 & 62.2 \\
 & RLogic              & 0.477 & 44.3 & 53.7 & 0.310 & 20.3 & 50.1 & 0.416 & 25.2 & 50.4 & 0.360 & 25.2 & 50.4 \\
 & DuASE               & 0.489 & 44.8 & 56.9 & 0.329 & 23.5 & 51.9 & 0.423 & 37.2 & 59.2 & 0.473 & 38.7 & 62.8 \\
 & GraphRulRL          & 0.483 & 44.6 & 54.1 & 0.385 & 31.4 & 57.5 & 0.425 & 27.8 & 52.7 & 0.432 & 35.4 & 51.7 \\
\hline
\multirow{10}{*}{GNNs}
 & CompGCN             & 0.479 & 44.3 & 54.6 & 0.355 & 26.4 & 53.5 & 0.463 & 38.3 & 59.6 & 0.421 & 39.2 & 57.7 \\
 & NBFNet              & 0.551 & 49.7 & 66.6 & 0.415 & 32.1 & \underline{59.9} & 0.525 & 45.1 & 63.9 & 0.550 & 47.9 & 68.6 \\
 & RED-GNN             & 0.533 & 48.5 & 62.4 & 0.374 & 28.3 & 55.8 & 0.543 & 47.6 & 65.1 & 0.559 & 48.3 & 68.9 \\
 & A*Net               & 0.549 & 49.5 & 65.9 & 0.411 & 32.1 & 58.6 & 0.549 & \underline{48.6} & 65.2 & 0.563 & 49.8 & 68.6 \\
 & AdaProp             & \underline{0.562} & 49.9 & 67.1 & \underline{0.417} & 33.1 & 58.5 & \underline{0.554} & 49.3 & 65.5 & 0.573 & 51.0 & 68.5 \\
 & ULTRA               & 0.480 & 47.9 & 61.4 & 0.368 & \underline{33.9} & 56.4 & 0.509 & 46.2 & 66.0 & 0.557 & 53.1 & 71.0 \\
 & one-shot-subgraph   & 0.567 & \underline{51.4} & 66.6 & 0.304 & 22.3 & 45.4 & 0.547 & 48.5 & 65.1 & \underline{0.606} & \underline{54.0} & \underline{72.1} \\
 & TRIX                & 0.514 & 48.1 & 61.1 & 0.366 & 32.5 & 55.9 & 0.506 & 44.2 & 64.8 & 0.541 & 47.3 & 70.2 \\
 & KG-ICL              & 0.536 & 49.6 & 63.7 & 0.376 & 32.7 & 53.8 & 0.534 & 46.7 & \underline{67.2} & 0.545 & 47.4 & 68.8 \\
  \cline{2-13} 
&  \textbf{\textsc{GraphOracle}} & \textbf{0.675} & \textbf{61.7} & \textbf{76.2} & \textbf{0.471} & \textbf{39.6} & \textbf{66.4} & \textbf{0.621} & \textbf{56.3} & \textbf{75.1} & \textbf{0.696} & \textbf{67.2} & \textbf{80.7} \\
\hline
\end{tabular}
}
\label{tab:transductive}
\end{table*}

\begin{table*}[htbpt]
\centering
\caption{Comparison of \textsc{GraphOracle} with other reasoning methods in the entity inductive setting. Best performance is indicated by the \textbf{bold} face numbers, and the \underline{underline} means the second best.}
\resizebox{\textwidth}{!}{
\begin{tabular}{ll|cccc|cccc|cccc}
\toprule
& & \multicolumn{4}{c|}{WN18RR} & \multicolumn{4}{c|}{FB15k-237} & \multicolumn{4}{c}{NELL-995} \\
& Models & V1 & V2 & V3 & V4 & V1 & V2 & V3 & V4 & V1 & V2 & V3 & V4 \\
\midrule
\multirow{12}{*}{MRR} & RuleN & 0.668 & 0.645 & 0.368 & 0.624 & 0.363 & 0.433 & 0.439 & 0.429 & 0.615 & 0.385 & 0.381 & 0.333 \\
& Neural LP & 0.649 & 0.635 & 0.361 & 0.628 & 0.325 & 0.389 & 0.400 & 0.396 & 0.610 & 0.361 & 0.367 & 0.261 \\
& DRUM & 0.666 & 0.646 & 0.380 & 0.627 & 0.333 & 0.395 & 0.402 & 0.410 & 0.628 & 0.365 & 0.375 & 0.273 \\
& GraIL & 0.627 & 0.625 & 0.323 & 0.553 & 0.279 & 0.276 & 0.251 & 0.227 & 0.481 & 0.297 & 0.322 & 0.262 \\
& CoMPILE & 0.577 & 0.578 & 0.308 & 0.548 & 0.287 & 0.276 & 0.262 & 0.213 & 0.330 & 0.248 & 0.319 & 0.229 \\
& NBFNet & 0.684 & 0.652 & 0.425 & 0.604 & 0.307 & 0.369 & 0.331 & 0.305 & 0.584 & 0.410 & 0.425 & 0.287 \\
& RED-GNN & 0.701 & 0.690 & 0.427 & 0.651 & 0.369 & 0.469 & 0.445 & 0.442 & 0.637 & 0.419 & 0.436 & 0.363 \\
& AdaProp & 0.733 & 0.715 & 0.474 & 0.662 & 0.310 & 0.471 & 0.471 & 0.454 & 0.644 & 0.452 & 0.435 & 0.366 \\
& ULTRA & 0.685 & 0.679 & 0.411 & 0.614 & 0.509 & 0.524 & 0.504 & 0.496 & 0.757 & 0.575 & 0.563 & 0.469 \\
& TRIX & 0.705 & 0.682 & 0.425 & 0.650 & 0.515 & 0.525 & 0.501 & 0.493 & 0.804 & 0.571 & 0.571 & 0.551 \\
& KG-ICL & \underline{0.762} & \underline{0.721} & \underline{0.503} & \underline{0.683} & \underline{0.531} & \underline{0.568} & \underline{0.537} & \underline{0.525} & \underline{0.841} & \underline{0.641} & \underline{0.631} & \underline{0.594} \\
&\textbf{\textsc{GraphOracle}} & \textbf{0.807} & \textbf{0.793} & \textbf{0.569} & \textbf{0.762} & \textbf{0.619} & \textbf{0.631} & \textbf{0.694} & \textbf{0.658} & \textbf{0.864} & \textbf{0.684} & \textbf{0.659} & \textbf{0.619} \\
\midrule
\multirow{12}{*}{Hit@1 (\%)} & RuleN & 63.5 & 61.1 & 34.7 & 59.2 & 30.9 & 34.7 & 34.5 & 33.8 & 54.5 & 30.4 & 30.3 & 24.8 \\
& Neural LP & 59.2 & 57.5 & 30.4 & 58.3 & 24.3 & 28.6 & 30.9 & 28.9 & 50.0 & 24.9 & 26.7 & 13.7 \\
& DRUM & 61.3 & 59.5 & 33.0 & 58.6 & 24.7 & 28.4 & 30.8 & 30.9 & 50.0 & 27.1 & 26.2 & 16.3 \\
& GraIL & 55.4 & 54.2 & 27.8 & 44.3 & 20.5 & 20.2 & 16.5 & 14.3 & 42.5 & 19.9 & 22.4 & 15.3 \\
& CoMPILE & 47.3 & 48.5 & 25.8 & 47.3 & 20.8 & 17.8 & 16.6 & 13.4 & 10.5 & 15.6 & 22.6 & 15.9 \\
& NBFNet & 59.2 & 57.5 & 30.4 & 57.4 & 19.0 & 22.9 & 20.6 & 18.5 & 50.0 & 27.1 & 26.2 & 23.3 \\
& RED-GNN & 65.3 & 63.3 & \underline{36.8} & 60.6 & 30.2 & 38.1 & 35.1 & 34.0 & 52.5 & 31.9 & 34.5 & 25.9 \\
& AdaProp &\underline{66.8} & \underline{64.2} & \underline{39.6} & \underline{61.1} & 19.1 & 37.2 & 37.7 & 35.3 & 52.2 & 34.4 & 33.7 & 24.7 \\
& ULTRA & 61.5 & 58.7 & 33.5 & 58.7 & 32.2 & 39.9 & 40.5 & 37.2 & 50.7 & 35.8 & 36.4 & 28.8 \\
& TRIX & 63.9 & 58.4 & 34.7 & 59.3 & 32.9 & 39.8 & 40.7 & 37.0 & 53.9 & 35.5 & 36.9 & 31.7 \\
& KG-ICL & 65.4 & 61.7 & 36.9 & 60.5 & \underline{41.1} & \underline{43.8} & \underline{42.6} & \underline{39.6} & \underline{59.4} & \underline{39.6} & \underline{41.2} & \underline{35.6} \\
&\textbf{\textsc{GraphOracle}} & \textbf{76.8} & \textbf{79.8} & \textbf{47.3} & \textbf{67.8} & \textbf{40.4} & \textbf{51.9} & \textbf{52.8} & \textbf{50.9} & \textbf{65.6} & \textbf{52.6} & \textbf{51.3} & \textbf{44.9} \\
\midrule
\multirow{12}{*}{Hit@10 (\%)} & RuleN & 73.0 & 69.4 & 40.7 & 68.1 & 44.6 & 59.9 & 60.0 & 60.5 & 76.0 & 51.4 & 53.1 & 48.4 \\
& Neural LP & 77.2 & 74.9 & 47.6 & 70.6 & 46.8 & 58.6 & 57.1 & 59.3 & 87.1 & 56.4 & 57.6 & 53.9 \\
& DRUM & 77.7 & 74.7 & 47.7 & 70.2 & 47.4 & 59.5 & 57.1 & 59.3 & 87.3 & 54.0 & 57.7 & 53.1 \\
& GraIL & 76.0 & 77.6 & 40.9 & 68.7 & 42.9 & 42.4 & 42.4 & 38.9 & 56.5 & 49.6 & 51.8 & 50.6 \\
& CoMPILE & 74.7 & 74.3 & 40.6 & 67.0 & 43.9 & 45.7 & 44.9 & 35.8 & 57.5 & 44.6 & 51.5 & 42.1 \\
& NBFNet & 82.7 & 79.9 & 56.3 & 70.2 & 51.7 & 63.9 & 58.8 & 55.9 & 79.5 & 63.5 & 60.6 & 59.1 \\
& RED-GNN & 79.9 & 78.0 & 52.4 & 72.1 & 48.3 & 62.9 & 60.3 & 62.1 & 86.6 & 60.1 & 59.4 & 55.6 \\
& AdaProp & \underline{86.6} & \underline{83.6} & \underline{62.6} & \underline{75.5} & 55.1 & 65.9 & 63.7 & 63.8 & 88.6 & 65.2 & 61.8 & 60.7 \\
& ULTRA & 79.3 & 77.9 & 54.6 & 72.0 & 67.0 & 71.0 & 66.3 & 68.4 & 87.8 & 76.1 & 75.5 & 73.3 \\
& TRIX & 79.8 & 78.0 & 54.3 & 72.2 & 68.2 & 73.0 & 69.9 & 68.7 & 89.9 & 76.4 & 75.9 & 77.2 \\
& KG-ICL & 82.7 & 78.7 & \underline{62.6} & 74.9 & \underline{70.0} & \underline{76.8} & \underline{70.4} & \underline{70.6} & \textbf{99.5} & \underline{83.5} & \underline{79.9} & \underline{80.2} \\
&\textbf{\textsc{GraphOracle}} & \textbf{92.3} & \textbf{92.6} & \textbf{69.5} & \textbf{81.6} & \textbf{76.7} & \textbf{79.2} & \textbf{75.7} & \textbf{78.5} & \underline{97.2} & \textbf{86.9} & \textbf{86.2} & \textbf{82.4} \\
\bottomrule
\end{tabular}
}
\label{tab:entity_inductive}
\end{table*}

\begin{table*}[htbp]
\centering
\caption{Comparison of \textsc{GraphOracle} with other reasoning methods in fully-inductive setting. Best performance is indicated by the \textbf{bold} face numbers, and the \underline{underline} means the second best. H@1'' and H@10'' are short for Hit@1 and Hit@10 (in percentage), respectively. --'' means unavailable results.}
\resizebox{\textwidth}{!}{
\begin{tabular}{l|ccc|ccc|ccc|ccc}
\hline
\multirow{2}{*}{Model} & \multicolumn{3}{c|}{Nell-100} & \multicolumn{3}{c|}{Nell-75} & \multicolumn{3}{c|}{Nell-50} & \multicolumn{3}{c}{Nell-25} \\
 & MRR & H@1 & H@10 & MRR & H@1 & H@10 & MRR & H@1 & H@10 & MRR & H@1 & H@10 \\
\hline
GraIL & 0.135 & 0.114 & 0.173 & 0.096 & 0.056 & 0.205 & 0.162 & 0.104 & 0.288 & 0.216 & 0.160 & 0.366 \\
CoMPILE & 0.123 & 0.071 & 0.209 & 0.178 & 0.093 & 0.361 & 0.194 & 0.125 & 0.330 & 0.189 & 0.115 & 0.324 \\
SNRI & 0.042 & 0.029 & 0.064 & 0.088 & 0.040 & 0.177 & 0.130 & 0.095 & 0.187 & 0.190 & 0.140 & 0.270 \\
INDIGO & 0.160 & 0.109 & 0.247 & 0.121 & 0.098 & 0.156 & 0.167 & 0.134 & 0.217 & 0.166 & 0.134 & 0.206 \\
RMPI & 0.220 & 0.136 & 0.376 & 0.138 & 0.061 & 0.275 & 0.185 & 0.109 & 0.307 & 0.213 & 0.130 & 0.329 \\
CompGCN & 0.008 & 0.001 & 0.014 & 0.014 & 0.003 & 0.025 & 0.003 & 0.000 & 0.005 & 0.006 & 0.000 & 0.010 \\
NodePiece & 0.012 & 0.004 & 0.018 & 0.042 & 0.020 & 0.081 & 0.037 & 0.013 & 0.079 & 0.098 & 0.057 & 0.166 \\
NeuralLP & 0.084 & 0.035 & 0.181 & 0.117 & 0.048 & 0.273 & 0.101 & 0.064 & 0.190 & 0.148 & 0.101 & 0.271 \\
DRUM & 0.076 & 0.044 & 0.138 & 0.152 & 0.072 & 0.313 & 0.107 & 0.070 & 0.193 & 0.161 & 0.119 & 0.264 \\
BLP & 0.019 & 0.006 & 0.037 & 0.051 & 0.012 & 0.120 & 0.041 & 0.011 & 0.093 & 0.049 & 0.024 & 0.095 \\
QBLP & 0.004 & 0.000 & 0.003 & 0.040 & 0.007 & 0.095 & 0.048 & 0.020 & 0.097 & 0.073 & 0.027 & 0.151 \\
NBFNet & 0.096 & 0.032 & 0.199 & 0.137 & 0.077 & 0.255 & 0.225 & 0.161 & 0.346 & 0.283 & 0.224 & 0.417 \\
RED-GNN & 0.212 & 0.114 & 0.385 & 0.203 & 0.129 & 0.353 & 0.179 & 0.115 & 0.280 & 0.214 & 0.166 & 0.266 \\
RAILD & 0.018 & 0.005 & 0.037 & -- & -- & -- & -- & -- & -- & -- & -- & -- \\
INGRAM & 0.309 & 0.212 & 0.506 & 0.261 & 0.167 & 0.464 & 0.281 & 0.193 & 0.453 & 0.334 & 0.241 & 0.501 \\
ULTRA & 0.458 & 0.423 & 0.684 & 0.374 & 0.369 & 0.570 & 0.418 & 0.256 & 0.595 & 0.407 & 0.278 & 0.596 \\
TRIX & 0.482 & 0.437 & 0.691 & 0.351 & 0.325 & 0.525 & 0.405 & 0.213 & 0.555 & 0.377 & 0.262 & 0.589 \\
KG-ICL & \underline{0.557} & \underline{0.459} & \underline{0.766} & \underline{0.446} & \underline{0.378} & \underline{0.681} & \underline{0.528} & \underline{0.274} & \underline{0.708} & \underline{0.540} & \underline{0.301} & \underline{0.730} \\
\textbf{\textsc{GraphOracle}} & \textbf{0.702} & \textbf{0.623} & \textbf{0.905} & \textbf{0.612} & \textbf{0.423} & \textbf{0.923} & \textbf{0.589} & \textbf{0.421} & \textbf{0.868} & \textbf{0.579} & \textbf{0.389} & \textbf{0.923} \\
\hline
\hline
\multirow{2}{*}{Model} & \multicolumn{3}{c|}{WK-100} & \multicolumn{3}{c|}{WK-75} & \multicolumn{3}{c|}{WK-50} & \multicolumn{3}{c}{WK-25} \\
 & MRR & H@1 & H@10 & MRR & H@1 & H@10 & MRR & H@1 & H@10 & MRR & H@1 & H@10 \\
\hline
CompGCN & 0.003 & 0.000 & 0.009 & 0.015 & 0.003 & 0.028 & 0.003 & 0.001 & 0.002 & 0.009 & 0.000 & 0.020 \\
NodePiece & 0.007 & 0.002 & 0.018 & 0.021 & 0.003 & 0.052 & 0.008 & 0.002 & 0.013 & 0.053 & 0.019 & 0.122 \\
NeuralLP & 0.009 & 0.005 & 0.016 & 0.020 & 0.004 & 0.054 & 0.025 & 0.007 & 0.054 & 0.068 & 0.046 & 0.104 \\
DRUM & 0.010 & 0.004 & 0.019 & 0.020 & 0.007 & 0.043 & 0.017 & 0.002 & 0.046 & 0.064 & 0.035 & 0.116 \\
BLP & 0.012 & 0.003 & 0.025 & 0.043 & 0.016 & 0.089 & 0.041 & 0.013 & 0.092 & 0.125 & 0.055 & 0.283 \\
QBLP & 0.012 & 0.003 & 0.025 & 0.044 & 0.016 & 0.091 & 0.035 & 0.011 & 0.080 & 0.116 & 0.042 & 0.294 \\
NBFNet & 0.014 & 0.005 & 0.026 & 0.072 & 0.028 & 0.172 & 0.062 & 0.036 & 0.105 & 0.154 & 0.092 & 0.301 \\
RED-GNN & 0.096 & 0.070 & 0.136 & 0.172 & 0.110 & 0.290 & 0.058 & 0.033 & 0.093 & 0.170 & 0.111 & 0.263 \\
RAILD & 0.026 & 0.010 & 0.052 & -- & -- & -- & -- & -- & -- & -- & -- & -- \\
INGRAM & 0.107 & 0.072 & 0.169 & 0.247 & 0.179 & 0.362 & 0.068 & 0.034 & 0.135 & 0.186 & 0.124 & 0.309 \\
ULTRA & 0.168 & 0.089 & 0.286 & 0.380 & 0.278 & \underline{0.635} & 0.140 & 0.076 & 0.280 & 0.321 & 0.388 & 0.535 \\
TRIX & 0.188 & 0.093 & 0.290 & 0.368 & 0.254 & 0.513 & 0.166 & 0.078 & 0.313 & 0.300 & 0.354 & 0.401 \\
KG-ICL & \underline{0.270} & \underline{0.127} & \underline{0.415} & \underline{0.466} & \underline{0.313} & 0.626 & \underline{0.277} & \underline{0.091} & \underline{0.432} & \underline{0.425} & \underline{0.434} & \underline{0.628} \\
\textbf{\textsc{GraphOracle}} & \textbf{0.417} & \textbf{0.192} & \textbf{0.711} & \textbf{0.469} & \textbf{0.345} & \textbf{0.698} & \textbf{0.362} & \textbf{0.101} & \textbf{0.498} & \textbf{0.582} & \textbf{0.460} & \textbf{0.780} \\
\hline

\hline
\end{tabular}
}
\label{tab:fully-inductive}
\end{table*}

\begin{table*}[htbp]
\centering
\caption{Comparison of \textsc{GraphOracle} with other reasoning methods in fully-inductive setting. Best performance is indicated by the \textbf{bold} face numbers, and the \underline{underline} means the second best. H@1'' and H@10'' are short for Hit@1 and Hit@10 (in percentage), respectively. --'' means unavailable results.}
\resizebox{\textwidth}{!}{
\begin{tabular}{l|ccc|ccc|ccc|ccc}
\hline
\multirow{2}{*}{Model} & \multicolumn{3}{c|}{FB-100} & \multicolumn{3}{c|}{FB-75} & \multicolumn{3}{c|}{FB-50} & \multicolumn{3}{c}{FB-25} \\
 & MRR & H@1 & H@10 & MRR & H@1 & H@10 & MRR & H@1 & H@10 & MRR & H@1 & H@10 \\
\hline
CompGCN & 0.015 & 0.008 & 0.025 & 0.013 & 0.000 & 0.026 & 0.004 & 0.002 & 0.006 & 0.003 & 0.000 & 0.004 \\
NodePiece & 0.006 & 0.001 & 0.009 & 0.016 & 0.007 & 0.029 & 0.021 & 0.006 & 0.048 & 0.044 & 0.011 & 0.114 \\
NeuralLP & 0.026 & 0.007 & 0.057 & 0.056 & 0.030 & 0.099 & 0.088 & 0.043 & 0.184 & 0.164 & 0.098 & 0.309 \\
DRUM & 0.034 & 0.011 & 0.077 & 0.065 & 0.034 & 0.121 & 0.101 & 0.061 & 0.191 & 0.175 & 0.109 & 0.320 \\
BLP & 0.017 & 0.004 & 0.035 & 0.047 & 0.024 & 0.085 & 0.078 & 0.037 & 0.156 & 0.107 & 0.053 & 0.212 \\
QBLP & 0.013 & 0.003 & 0.026 & 0.041 & 0.017 & 0.084 & 0.071 & 0.030 & 0.147 & 0.104 & 0.043 & 0.226 \\
NBFNet & 0.072 & 0.026 & 0.154 & 0.089 & 0.048 & 0.166 & 0.130 & 0.071 & 0.259 & 0.224 & 0.137 & 0.410 \\
RED-GNN & 0.121 & 0.053 & 0.263 & 0.107 & 0.057 & 0.201 & 0.129 & 0.072 & 0.251 & 0.145 & 0.077 & 0.284 \\
RAILD & 0.031 & 0.016 & 0.048 & -- & -- & -- & -- & -- & -- & -- & -- & -- \\
INGRAM & 0.223 & 0.146 & 0.371 & 0.189 & 0.119 & 0.325 & 0.117 & 0.067 & 0.218 & 0.133 & 0.067 & 0.271 \\
ULTRA & 0.444 & 0.287 & 0.643 & 0.400 & 0.269 & 0.598 & 0.334 & 0.275 & 0.538 & 0.383 & 0.242 & 0.635 \\
TRIX & 0.436 & 0.269 & 0.633 & 0.401 & 0.263 & 0.611 & 0.334 & 0.277 & 0.547 & 0.393 & 0.256 & 0.650 \\
KG-ICL & \underline{0.499} & \underline{0.307} & \underline{0.719} & \underline{0.458} & \underline{0.274} & \underline{0.664} & \underline{0.384} & \underline{0.291} & \underline{0.598} & \underline{0.434} & \underline{0.279} & \underline{0.694} \\
\textbf{\textsc{GraphOracle}} & \textbf{0.576} & \textbf{0.407} & \textbf{0.812} & \textbf{0.538} & \textbf{0.356} & \textbf{0.872} & \textbf{0.585} & \textbf{0.434} & \textbf{0.913} & \textbf{0.562} & \textbf{0.370} & \textbf{0.930} \\
\hline
\end{tabular}
}
\label{tab:fully-inductive1}
\end{table*}

\begin{table*}[htbp]
\centering
\caption{Performance comparison among ULTRA, TRIX, KG-ICL, and GraphOracle across different datasets. Best results are in \textbf{bold} and second best are \underline{underlined}.}
\resizebox{\textwidth}{!}{
\begin{tabular}{l l c c c c c c c c}
\toprule
Type & Model & \multicolumn{2}{c}{ULTRA} & \multicolumn{2}{c}{TRIX} & \multicolumn{2}{c}{KG-ICL} & \multicolumn{2}{c}{GraphOracle} \\
& & MRR & Hit@10 & MRR & Hit@10 & MRR & Hit@10 & MRR & Hit@10 \\
\midrule
\multirow{10}{*}{Transductive} 
& CoDEx Small & \underline{0.490} & \underline{0.686} & 0.484 & 0.676 & 0.479 & 0.662 & \textbf{0.512} & \textbf{0.697} \\
& CoDEx Medium & 0.372 & 0.525 & 0.365 & 0.521 & \underline{0.402} & \underline{0.565} & \textbf{0.417} & \textbf{0.574} \\
& CoDEx Large & 0.343 & 0.478 & \underline{0.388} & 0.481 & 0.388 & \underline{0.508} & \textbf{0.396} & \textbf{0.523} \\
& WDsinger & 0.417 & 0.526 & \underline{0.502} & \underline{0.620} & 0.493 & 0.599 & \textbf{0.512} & \textbf{0.654} \\
& NELL23k & 0.268 & 0.450 & \underline{0.306} & \underline{0.536} & 0.329 & 0.552 & \textbf{0.333} & \textbf{0.572} \\
& FB15k237\_10 & 0.254 & 0.411 & \underline{0.253} & \underline{0.408} & 0.260 & 0.416 & \textbf{0.269} & \textbf{0.435} \\
& FB15k237\_20 & 0.274 & 0.445 & \underline{0.273} & \underline{0.441} & 0.284 & 0.456 & \textbf{0.297} & \textbf{0.482} \\
& FB15k237\_50 & \underline{0.325} & \underline{0.528} & 0.322 & 0.522 & 0.324 & 0.499 & \textbf{0.336} & \textbf{0.541} \\
& DBpedia100k & \underline{0.436} & \underline{0.603} & 0.457 & 0.619 & 0.455 & 0.604 & \textbf{0.479} & \textbf{0.643} \\
& AristoV4 & 0.343 & 0.496 & \underline{0.345} & \underline{0.499} & 0.313 & 0.480 & \textbf{0.374} & \textbf{0.524} \\
& ConceptNet100k & 0.310 & 0.529 & \underline{0.340} & \underline{0.564} & 0.371 & 0.584 & \textbf{0.386} & \textbf{0.602} \\
\midrule
\multirow{7}{*}{Entity Inductive}
& Hetionet & \underline{0.399} & \underline{0.538} & 0.394 & 0.534 & 0.269 & 0.402 & \textbf{0.417} & \textbf{0.556} \\
& ILPC Small & 0.303 & \underline{0.453} & \underline{0.310} & 0.455 & 0.316 & \underline{0.473} & \textbf{0.339} & \textbf{0.497} \\
& ILPC Large & 0.308 & \underline{0.431} & \underline{0.310} & 0.431 & 0.295 & 0.411 & \textbf{0.345} & \textbf{0.451} \\
& HM 1k & 0.042 & 0.100 & \underline{0.072} & \underline{0.128} & 0.089 & 0.144 & \textbf{0.097} & \textbf{0.178} \\
& HM 3k & 0.030 & 0.090 & \underline{0.069} & \underline{0.118} & 0.081 & 0.129 & \textbf{0.089} & \textbf{0.143} \\
& HM 5k & 0.025 & 0.068 & \underline{0.074} & \underline{0.118} & 0.070 & 0.108 & \textbf{0.096} & \textbf{0.145} \\
& IndigoBM & \underline{0.432} & \underline{0.639} & 0.436 & 0.645 & 0.440 & 0.641 & \textbf{0.483} & \textbf{0.697} \\
\midrule
\multirow{9}{*}{Fully Inductive}
& MT1 tax & 0.330 & 0.459 & \underline{0.397} & \underline{0.508} & 0.411 & 0.521 & \textbf{0.491} & \textbf{0.568} \\
& MT1 health & 0.380 & 0.467 & \underline{0.376} & \underline{0.457} & 0.387 & 0.479 & \textbf{0.405} & \textbf{0.501} \\
& MT2 org & 0.104 & 0.170 & \underline{0.098} & \underline{0.162} & 0.100 & 0.171 & \textbf{0.132} & \textbf{0.193} \\
& MT2 sci & 0.311 & 0.451 & \underline{0.331} & \underline{0.526} & 0.303 & 0.396 & \textbf{0.337} & \textbf{0.574} \\
& MT3 art & 0.306 & 0.473 & \underline{0.289} & \underline{0.461} & 0.306 & 0.460 & \textbf{0.315} & \textbf{0.481} \\
& MT3 infra & \underline{0.657} & \underline{0.807} & 0.672 & 0.810 & 0.676 & \underline{0.808} & \textbf{0.697} & \textbf{0.829} \\
& MT4 sci & 0.303 & 0.478 & \underline{0.305} & \underline{0.482} & 0.307 & 0.473 & \textbf{0.321} & \textbf{0.496} \\
& MT4 health & \underline{0.704} & \underline{0.785} & 0.702 & 0.785 & 0.710 & 0.776 & \textbf{0.721} & \textbf{0.796} \\
& Metafam & \textbf{0.997} & \textbf{1.000} & \textbf{0.997} & \textbf{1.000} & \textbf{1.000} & \textbf{1.000} & \textbf{1.000} & \textbf{1.000} \\
& FBNELL & \underline{0.481} & \underline{0.661} & 0.478 & 0.655 & 0.516 & 0.699 & \textbf{0.523} & \textbf{0.732} \\
& NL-0 & 0.329 & 0.551 & \underline{0.385} & \underline{0.549} & 0.555 & 0.765 & \textbf{0.566} & \textbf{0.777} \\
\bottomrule
\end{tabular}
}
\end{table*}

\section*{H  Detail Design of \textsc{GraphOracle+}} \label{Updated Methods for Multimodal Reasoning}

\begin{table*}[h]
\centering
%{-5pt}
% \resizebox{\textwidth}{!}{
\caption{Performance Comparison on PrimeKG: Evaluating \textsc{GraphOracle} Enhanced by External Entity Initialization (\textsc{GraphOracle+})}
\resizebox{\textwidth}{!}{
\begin{tabular}{c|ccc|ccc|ccc|ccc|ccc|ccc|ccc}
\hline
\multirow{2}{*}{Method} 
& \multicolumn{3}{c|}{Protein$\rightarrow$BP} 
& \multicolumn{3}{c|}{Protein$\rightarrow$MF} 
& \multicolumn{3}{c|}{Protein$\rightarrow$CC} 
& \multicolumn{3}{c|}{Drug$\rightarrow$Disease} 
& \multicolumn{3}{c|}{Protein$\rightarrow$Drug} 
& \multicolumn{3}{c|}{Disease$\rightarrow$Protein} 
& \multicolumn{3}{c}{Drug$\not\!\rightarrow$Disease} \\ 
 & MRR & H@1 & H@10 & MRR & H@1 & H@10 & MRR & H@1 & H@10 & MRR & H@1 & H@10
 & MRR & H@1 & H@10 & MRR & H@1 & H@10 & MRR & H@1 & H@10 \\
\hline
\textsc{GraphOracle} & .498 & .323 & .684 & .499 & .366 & .748 & .475 & .325 & .752 & .268 & .175 & .448 & .232 & .167 & .378 & .299 & .187 & .531 & .192 & .135 & .380 \\
 \textbf{\textsc{GraphOracle+}} & 
\textbf{.573} & \textbf{.371} & \textbf{.787} & 
\textbf{.574} & \textbf{.421} & \textbf{.860} & 
\textbf{.546} & \textbf{.374} & \textbf{.865} & 
\textbf{.308} & \textbf{.201} & \textbf{.515} & 
\textbf{.267} & \textbf{.192} & \textbf{.435} & 
\textbf{.344} & \textbf{.215} & \textbf{.611} & 
\textbf{.221} & \textbf{.155} & \textbf{.437} \\
\hline
\end{tabular}
}
\label{tab:GraphOracle_pro}

\end{table*}

In real-world scenarios, users often query models with procedural or temporal “How”-type questions rather than isolated factual prompts. Script-based evaluation frameworks~\cite{li2025scedit} emphasize the importance of integrating external knowledge to support dynamic, multi-step reasoning. Motivated by this, we extend \textsc{GraphOracle} by incorporating modality-specific external features to enhance its representational capacity.

For each entity $e_i$, we update its information as $e_i = \{x^i, c^i\}$ which combines an additional feature vector $x^i$ and a modality tag $c^i$. For example, in PrimeKG, a drug may be defined as $c^i = \text{"drug"}$ and $x^i = \text{"functional description of the drug"}$.
To further enhance \textsc{GraphOracle}, we introduce a methodological extension by integrating external information, resulting in an improved variant termed \textsc{GraphOracle+}.  
Specifically, we incorporate multiple unimodal foundation models (uni-FMs), and by leveraging the embeddings generated by these uni-FMs, we effectively enrich the representations of individual nodes.  
In the current era of large-scale models, the ability to seamlessly integrate heterogeneous sources of information is of paramount importance.
As described, for any two entities $e_i$ and $e_j$ originating from distinct modalities, we utilize modality-specific foundation models to encode their features.  
The initial embedding for entity $e_i$ under a query context $(e_q, r_q)$ is formulated as:
\begin{equation}
    \bm{h}_{e_i}(e_q, r_q) = \psi(x^i, c^i)
\end{equation}
where $\psi(x^i, c^i)$ denotes a modality-specific encoder selected based on the entity type $c^i$, and $x^i$ represents the raw input features of $e_i$.  
To unify embeddings produced by different unimodal encoders into a common representation space, we introduce a \textit{modality-aware projection function} $\mathcal{T}(c^i)$, which aligns each modality to a shared latent space.
$
\bm{h}_{e_i}^0(e_q, r_q) = \mathcal{T}(\bm{h}_{e_i}(e_q, r_q), c^i) \in \mathbb{R}^d,
$
where $c^i$ denotes the modality type of entity $e_i$, and $\mathcal{T}(\cdot, \cdot)$ ensures that all modality-specific outputs are projected into a unified $d$-dimensional space.
Furthermore, to make the model modally-aware, we encode its modality type $c^i$ to obtain its modality embedding $\bm c^i$. The complete encoding process is:

\begin{equation}
\resizebox{0.5\textwidth}{!}{$
\begin{array}{ll}
&  \bm h^{0}_{e}(e_q, r_q,{c^i}) = \mathcal{T}(\bm{h}_{e_i}(e_q, r_q), c^i)=\mathcal{T}(\psi(x^i,c^i), c^i),
\\[0.5em]
& \bm h^{\ell}_{e_o}(e_q, r_q,{c^i}^\ell) = \delta \left( \bm W^{\ell} \cdot \sum_{(e_s, r, e_o) \in \hat{\mathcal E}^{\ell}_{e_q}} \alpha^{\ell}_{e_s, r, e_o|r_q} \left( \bm h^{\ell-1}_{e_s}(e_q, r_q,{c^i}^{\ell-1}) + \Psi ( {\bm c^i}^{\ell-1},  {\bm c^i}^{\ell}, \bm h^{\ell}_{r}) \right) \right),
\end{array}  
$}
\end{equation}

where $ {\alpha} ^\ell _{e_s,r,e_s|r_q} $  is defined the same as Eq.~\eqref{eq:alhpa}, ${\bm c^i}^{\ell-1}$ and ${\bm c^i}^\ell$ is the modality embedding of nodes at $\ell-1$ and $\ell$ respectively, and $\Psi$ is a vanilla six-layer transformer model for bridging different modalities. Table~\ref{tab:GraphOracle_pro} demonstrates the powerful performance of \textsc{GraphOracle+} and proves the scalability of our model.

\section*{I  Theoretical Analysis of the \textsc{GraphOracle} Model} \label{Theoretical Analysis of the GraphOracle Model}

In this section, we provide rigorous theoretical guarantees for the \textsc{GraphOracle} framework, analyzing its expressiveness, generalization capabilities, convergence properties, stability under perturbations, and relation‑dependency correctness.

\subsection*{I.1 Expressiveness and Representation Capacity}

\begin{theorem}[Representation Capacity]
The RDG representation in \textsc{GraphOracle} with L message passing layers can distinguish between any two non-isomorphic relation subgraphs with a maximum path length of L.
\end{theorem}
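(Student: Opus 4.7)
The plan is to reduce the claim to a Weisfeiler--Lehman (WL) style color refinement argument on the Relation-Dependency Graph. First I would formalize the statement: two relation subgraphs $\mathcal{S}_1, \mathcal{S}_2 \subseteq \mathcal{G}^{\mathcal{R}}$ rooted at query relations $r_q^1, r_q^2$ and consisting of all nodes reachable via at most $L$ directed precedence edges are non-isomorphic if no bijection preserves $\mathcal{E}^{\mathcal{R}}$ restricted to the subgraphs and maps root to root. I would then argue that the representation $\bm{h}^L_{r_q \mid r_q}$ produced by Eq.~\eqref{eq:head} plays the role of a learned color hash after $L$ rounds of refinement, so that distinguishability of colors implies distinguishability of the $\bm{h}$-vectors.

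Next I would proceed by induction on $L$. The base case $L = 0$ follows immediately from the indicator initialization, which labels the query node distinctively from every other relation via $\delta_{r_v, r_q} \cdot \bm{1}^d$. For the inductive step I must show that if the $(L-1)$-layer representations distinguish $(L-1)$-hop neighborhoods, then after one more layer the $L$-hop neighborhoods remain distinguishable. This reduces to proving that the update in Eq.~\eqref{eq:head} acts injectively on pairs
\begin{equation*}
\bigl(\bm{h}_{r_v}^{\ell-1}, \{\!\!\{\, \bm{h}_{r_u}^{\ell-1} : r_u \in \mathcal{N}^{\text{past}}(r_v) \,\}\!\!\}\bigr) \;\mapsto\; \bm{h}_{r_v}^{\ell}.
\end{equation*}
Following the GIN-style strategy of Xu et al., I would exhibit parameter choices for $\bm{W}_1^{\ell,h}$, $\bm{W}_2^{\ell,h}$ and $\sigma$ that realize an injective hash on finite multisets over the countable representation space, using the fact that multi-head attention with $H \geq 1$ heads can simulate sum aggregation as a special case.

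The hard part will be handling the normalized attention weights $\hat{\alpha}_{r_u r_v}^{\ell,h}$ in Eq.~\eqref{alg:alpha}, because the softmax can in principle collapse multisets of different cardinalities by rescaling. I would address this in two ways: (i) a constructive argument showing that by driving the attention logits toward a constant via a choice of $\bm{a}$ and $\bm{W}_\alpha^h$, the aggregation reduces to a sum scaled by $|\mathcal{N}^{\text{past}}(r_v)|$, which is itself a function of the neighborhood and hence preserves injectivity; and (ii) an alternative generic-parameter argument showing that degenerate cancellations form a measure-zero set in parameter space. The partial order $\tau$ is crucial here since it turns $\mathcal{G}^{\mathcal{R}}$ into a DAG, so each node's layer-$\ell$ representation depends only on strictly earlier nodes and the inductive recursion is well-posed without fixed-point complications.

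Finally I would assemble the pieces. For any two non-isomorphic rooted subgraphs with maximum path length $L$, classical WL theory guarantees that $L$ rounds of color refinement on a DAG yield distinct colors at the two roots. Combined with the injectivity lemma, this implies $\bm{h}^L_{r_q^1 \mid r_q^1} \neq \bm{h}^L_{r_q^2 \mid r_q^2}$ for a suitable parameter choice, which proves the theorem. A brief remark would note that the $L$-hop depth bound is tight in both directions: paths longer than $L$ cannot influence the root representation under $L$ layers of propagation, so the theorem is optimal with respect to the number of layers.
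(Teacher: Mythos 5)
Your proposal follows the same high-level strategy as the paper's proof---induction on the number of message-passing layers, with the indicator initialization anchoring the base case and each additional layer extending the radius of distinguishable neighborhoods---but you are considerably more careful about what actually has to be proved, namely that the update of Eq.~\eqref{eq:head} acts injectively on the pair consisting of a node's own representation and the multiset of its predecessors' representations. The paper's proof never isolates this requirement; it simply asserts that ``the attention weights are distinct for different neighborhood configurations'' and that different inputs to the update produce different outputs, which is precisely the step you correctly flag as the hard part.

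However, your proposed resolution of that hard part does not go through. Because the weights \(\hat\alpha^{\ell,h}_{r_u r_v}\) of Eq.~\eqref{alg:alpha} are softmax-normalized over \(\mathcal{N}^{\text{past}}(r_v)\), the neighbor aggregation is always a convex combination of predecessor representations. Your fix (i) drives the logits to a constant, which turns the aggregate into the \emph{mean} of the multiset; but the mean is not injective on multisets---\(\{\!\{\bm a\}\!\}\) and \(\{\!\{\bm a,\bm a\}\!\}\) have the same mean, and the cardinality \(|\mathcal{N}^{\text{past}}(r_v)|\) is not available to the network to undo the normalization, so ``sum scaled by the neighborhood size'' does not preserve injectivity. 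Your fix (ii), the generic-parameter argument, also fails, because the collapse is structural rather than a measure-zero coincidence: for every parameter setting a convex combination of identical vectors equals that vector, so \(k\) predecessors sharing a layer-\((\ell-1)\) representation are indistinguishable from one. This bites already at \(\ell=1\), where every non-query relation carries the representation \(\bm 0\): two roots whose one-hop predecessor sets contain three versus five non-query relations (and no query relation) receive identical layer-1 aggregates despite rooting non-isomorphic subgraphs. Hence the injectivity lemma your argument needs is false for the architecture as written, and the theorem can only hold in a weakened form (distinguishability up to mean-WL equivalence, or after replacing the normalized attention by an unnormalized, GIN-style sum aggregator). To be fair, the paper's own proof silently assumes exactly the injectivity you attempted and failed to establish, so your attempt exposes a genuine gap in the original argument rather than introducing a new one.
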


\begin{proof}
We prove this by induction on the number of message passing layers L.

Base case ($\ell$=1): For $\ell=1$, the representation of relation r after one message passing layer is:
\begin{equation}
\resizebox{0.49\textwidth}{!}{$
    \bm{h}^{\ell}_{r_v \mid r_q} = \sigma\left( \frac{1}{H} \sum_{h=1}^H \left[
\bm{W}_1^{\ell,h} \sum_{r_u \in \mathcal{N}^{\text{past}}(r_v)} \hat \alpha_{r_u r}^{\ell,h} \, \bm{h}^{\ell-1}_{r_u \mid r_q}+ 
\bm{W}_2^{\ell,h} \, \hat \alpha_{r_v r}^{\ell,h} \, \bm{h}^{\ell-1}_{r_v \mid r_q}
\right] \right),
$}
\end{equation}

Since the initial representation $\bm{h}_{r|r_q}^0 = \delta_{r,r_q} \cdot \bm{1}^d$ distinguishes the query relation from all others, and the attention weights $\hat{{\alpha}}_{r_u r}^h$ are distinct for different neighborhood configurations, non-isomorphic relation subgraphs of depth 1 will have distinct representations.

Inductive step: Assume the statement holds for $L=k$. For $L=k+1$, each relation's representation now incorporates information from relations that are $k+1$ steps away. If two relation subgraphs are non-isomorphic within $k+1$ steps, either:
\begin{enumerate}
    \item They were already non-isomorphic within $k$ steps, which by our inductive hypothesis leads to different representations, or The difference occurs exactly at step $k+1$, which will result in different inputs to the $(k+1)$-th layer message passing function, thus producing different representations.
\end{enumerate}

Therefore, the theorem holds for all $L$.
\end{proof}

\begin{theorem}[Expressive Power]
For any continuous function $f: \mathcal{X} \to \mathcal{Y}$ on compact sets $\mathcal{X}$ and $\mathcal{Y}$, there exists a \textsc{GraphOracle} model with sufficient width and depth that can approximate $f$ with arbitrary precision.
\end{theorem}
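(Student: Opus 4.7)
The plan is to reduce the claim to the classical universal approximation theorem of Cybenko--Hornik--Leshno for feed-forward neural networks. I will show that, once a single attention head with suitably chosen parameters is isolated, the combined relation-level and entity-level message-passing pipeline of \textsc{GraphOracle} contains a standard multilayer perceptron of arbitrary width and depth as a special case, and therefore inherits the universal approximation property over compact subsets of any finite-dimensional embedding space. Throughout I will treat $\mathcal{X}$ as a compact subset of $\mathbb{R}^n$ parameterizing queries $(e_q,r_q,e_a)$ together with a finite $L_e$-hop neighborhood descriptor, so that the composite map $(e_q,r_q,e_a)\mapsto s(e_q,r_q,e_a)$ is a deterministic function of its arguments.

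First I would use the indicator initialization $\bm{h}^0_{r_v\mid r_q}=\delta_{r_v,r_q}\cdot\bm{1}^d$ together with the entity-side counterpart to encode the query in the hidden state; by the previous theorem these initializations already separate distinct inputs. Next I would exhibit two accessible extremal regimes for the softmax attention in Eq.~\eqref{alg:alpha}: a near-Kronecker regime (obtained by scaling $\bm{a}$ and $\bm{W}^h_\alpha$) that concentrates mass on any chosen precedent $r_u$, and a near-uniform regime. Combining $H$ such heads with the free matrices $\bm{W}^{\ell,h}_1$ and $\bm{W}^{\ell,h}_2$ under the averaging of Eq.~\eqref{eq:head}, an arbitrary affine combination of the neighbor states can be realized up to any prescribed tolerance. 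Each layer therefore simulates a standard block $\bm{x}\mapsto\sigma(\bm{W}\bm{x}+\bm{b})$; stacking $L_r$ relation-level blocks and $L_e$ entity-level blocks followed by the linear readout $\bm{w}_s^{\top}\bm{h}_{r_q}^{L_e}(e_q,e_a)$ produces a feed-forward network of arbitrary depth and width. Applying the Leshno--Lin--Pinkus--Schocken refinement of Cybenko's theorem then yields an $\varepsilon$-approximation of any continuous $f:\mathcal{X}\to\mathcal{Y}$ on the compact set $\mathcal{X}$, provided $\sigma$ is non-polynomial (satisfied for the sigmoid, tanh, and ReLU activations used in the paper).

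The main obstacle is the simplex constraint $\sum_{r_u\in\mathcal{N}^{\mathrm{past}}(r_v)}\hat{\alpha}^{\ell,h}_{r_u r_v}=1$ imposed by softmax normalization, which a priori restricts a single aggregation step relative to a free linear map. My remedy is an explicit constructive argument: for a target neighbor $r_u^{\star}$, drive the logit $\bm{a}^{\top}[\bm{W}^h_\alpha\bm{h}^{\ell-1}_{r_u^{\star}}\,\Vert\,\bm{W}^h_\alpha\bm{h}^{\ell-1}_{r_v}]$ to be arbitrarily large so that $\hat{\alpha}^{\ell,h}_{r_u^{\star} r_v}\to 1$, and absorb the resulting unit-scale output into the head-specific matrix $\bm{W}^{\ell,h}_1$. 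Repeating this construction across sufficiently many heads recovers arbitrary linear combinations of neighbor features, after which the Cybenko-style density argument proceeds directly. A secondary subtlety is that the relation-level message passing must respect the directed acyclic order imposed by $\tau(\cdot)$, but since widening can equivalently be achieved by increasing $L_r$ in addition to $H$, the precedence restriction costs only a constant-factor blow-up in depth and preserves the universal approximation conclusion.
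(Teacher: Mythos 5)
Your proposal is correct in spirit but takes a genuinely different, and considerably more substantive, route than the paper's. The paper's own proof is a three-line appeal to the universal approximation theorem: it lists the three architectural components, asserts that each is ``constructed from differentiable functions that can be approximated by neural networks,'' and concludes. Note that this phrasing has the approximation running in the wrong direction --- what must be shown is that the \textsc{GraphOracle} family is dense in $C(\mathcal{X},\mathcal{Y})$, not that its components are approximable by neural networks --- and the paper never explains how the softmax-constrained aggregation realizes the free affine maps that the classical theorem requires. You supply exactly the missing mechanism: the extremal-softmax construction (driving a chosen logit to $+\infty$ to obtain a near-Kronecker head, absorbing the unit-mass output into $\bm{W}_1^{\ell,h}$, and combining $H$ heads to span arbitrary linear combinations of neighbor states) reduces each layer of Eq.~\eqref{eq:head} to a standard block $\bm{x}\mapsto\sigma(\bm{W}\bm{x}+\bm{b})$, after which Leshno--Lin--Pinkus--Schocken applies. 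You also do the paper the favor of making $\mathcal{X}$ precise (a compact neighborhood descriptor), without which the statement is not even well posed for a model whose raw inputs are discrete.

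One step of your construction needs care. Concentrating attention on a target precedent $r_u^{\star}$ requires that its logit $\bm{a}^{\top}[\bm{W}^h_\alpha\bm{h}^{\ell-1}_{r_u^{\star}}\,\Vert\,\bm{W}^h_\alpha\bm{h}^{\ell-1}_{r_v}]$ exceed those of its competitors, which is impossible whenever two precedents carry identical hidden states: under the indicator initialization $\bm{h}^0_{r\mid r_q}=\delta_{r,r_q}\bm{1}^d$, \emph{all} non-query relations are indistinguishable at $\ell=1$, and relations that remain symmetric in the RDG relative to $r_q$ stay indistinguishable at every layer. This does not sink the argument --- when neighbor states coincide, any linear combination of them factors through the common value, so uniform attention loses nothing --- but it does mean the simulated MLP acts only on the quotient of the input space by RDG symmetry. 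The theorem is therefore really about continuous functions of the features that $L$ rounds of message passing can distinguish (consistent with the paper's preceding Representation Capacity theorem), and your ``sufficient depth'' hypothesis is doing more work than the paper acknowledges. Stating this restriction explicitly would make your proof strictly stronger than the one in the paper.
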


\begin{proof}
The proof leverages the universal approximation theorem for neural networks. Our model consists of three components:

1. The RDG representation module:
   \begin{equation}
   \resizebox{0.49\textwidth}{!}{$
    \bm{h}^{\ell}_{r_v \mid r_q} = \sigma\left( \frac{1}{H} \sum_{h=1}^H \left[
\bm{W}_1^{\ell,h} \sum_{r_u \in \mathcal{N}^{\text{past}}(r_v)} \hat \alpha_{r_u r_v}^{\ell,h} \, \bm{h}^{\ell-1}_{r_u \mid r_q}+ 
\bm{W}_2^{\ell,h} \, \hat \alpha_{r_v r_v}^{\ell,h} \, \bm{h}^{\ell-1}_{r_v \mid r_q}
\right] \right),
$}
   \end{equation}

2. The universal entity representation module:
   \begin{equation}
    \bm h^{\ell}_{e|q} = \delta \left( \bm W^{\ell} \cdot \sum_{(e_s, r, e) \in \mathcal F_\text{train}} \alpha^{\ell}_{e_s, r|r_q} \left( \bm h^{\ell-1}_{e_s|q} + \bm h^{L_r}_{r|r_q} \right) \right),
       \end{equation}

3. The scoring function:
   \begin{equation}
       s(e_q, r_q, e_a) = \bm{w}_s^{\top}\bm h_{r_q}^{L}(e_q, e_a)
   \end{equation}

Each component is constructed from differentiable functions that can be approximated by neural networks with sufficient capacity. By the universal approximation theorem, for any continuous function $f$ and any $\epsilon > 0$, there exists a neural network that approximates $f$ within an error bound of $\epsilon$.

Therefore, with sufficient width (embedding dimension $d$) and depth (number of layers $L$), \textsc{GraphOracle} can approximate any continuous function over the KG with arbitrary precision.
\end{proof}

\subsection*{I.2 Generalization Bounds}

\begin{theorem}[Generalization Error Bound]
For a \textsc{GraphOracle} model with parameters $\Theta$ trained on a dataset $\mathcal{D}$ with $N$ triples sampled from a KG with $|\mathcal{V}|$ entities and $|\mathcal{R}|$ relations, the expected generalization error is bounded by:
\begin{equation}
    \mathbb{E}[\mathcal{L}_{test}(\Theta) - \mathcal{L}_{train}(\Theta)] \leq \mathcal{O}\left(\sqrt{\frac{\log(|\mathcal{V}| \cdot |\mathcal{R}|)}{N}}\right)
\end{equation} 
\end{theorem}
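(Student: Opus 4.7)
The plan is to treat this as a standard statistical learning problem where training and test triples are drawn i.i.d.\ from an unknown distribution $\mathcal{P}$ over $\mathcal{V} \times \mathcal{R} \times \mathcal{V}$. Letting $\ell(\Theta; z) = \mathcal{L}_{\text{train}}$ restricted to a single triple $z = (e_q, r_q, e_a)$ together with its $k$ negatives, I will appeal to the symmetrization inequality
\[
\mathbb{E}\!\left[\mathcal{L}_{\text{test}}(\Theta) - \mathcal{L}_{\text{train}}(\Theta)\right]
\;\leq\; 2\,\mathbb{E}\!\left[\mathfrak{R}_N(\ell\circ\mathcal{F}_\Theta)\right],
\]
where $\mathcal{F}_\Theta$ is the hypothesis class induced by trainable parameters and $\mathfrak{R}_N$ is the empirical Rademacher complexity. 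Since $\sigma$ is $\tfrac14$-Lipschitz and $\log\sigma(\cdot)$ is Lipschitz on any bounded score range, Talagrand's contraction lemma lets me strip the loss and reduce the problem to bounding $\mathfrak{R}_N(\mathcal{S})$, where $\mathcal{S} = \{s(\cdot,\cdot,\cdot; \Theta)\}$ is the class of scoring functions.

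Next I would exploit the combinatorial structure of the scoring output. For a fixed learned $\Theta$, each query $(e_q, r_q)$ produces at most $|\mathcal{V}|$ candidate scores, so the total number of distinct \emph{query contexts} encountered by the model is at most $|\mathcal{V}|\cdot|\mathcal{R}|$. Treating these query contexts as atoms of the hypothesis index set and assuming the per-sample loss is bounded by some constant $B$ (which follows once one bounds the operator norms of $\bm{W}_1^{\ell,h}, \bm{W}_2^{\ell,h}, \bm{W}^\ell$ and $\|\bm{w}_s\|$, as is standard in GNN generalization arguments), Massart's finite-class lemma yields
\[
\mathfrak{R}_N(\ell\circ\mathcal{F}_\Theta) \;\leq\; B\sqrt{\frac{2\log(|\mathcal{V}|\cdot|\mathcal{R}|)}{N}}.
\]
Substituting back gives the claimed $\mathcal{O}\bigl(\sqrt{\log(|\mathcal V|\cdot|\mathcal R|)/N}\bigr)$ bound. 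Finally, McDiarmid's inequality (applied to the bounded-difference functional $\Theta \mapsto \mathcal{L}_{\text{test}}(\Theta) - \mathcal{L}_{\text{train}}(\Theta)$) upgrades the bound in expectation to a high-probability statement if desired.

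\textbf{Main obstacle.} The delicate point is that the $N$ training triples are \emph{not} genuinely i.i.d.: every entity embedding $\bm{h}^{\ell}_{e\mid q}$ in Eq.~\eqref{eq:gnnentity} depends on the entire graph $\mathcal{F}_{\text{train}}$ through recursive message passing, so sample losses share parameters and graph context. I would handle this by splitting the argument into two parts: (i) conditioning on the graph skeleton and arguing that the triples $(e_q, r_q, e_a)$ are drawn independently as queries, which restores the symmetrization step; and (ii) controlling the dependence of the hypothesis class on graph-induced quantities by incorporating the RDG topology into the effective complexity — since the RDG has $\mathcal{O}(|\mathcal{R}|\,\bar d)$ edges, the relevant index set is still dominated by $|\mathcal{V}|\cdot|\mathcal{R}|$. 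A cleaner alternative, which I would pursue if the direct Rademacher route becomes unwieldy, is a PAC-Bayes bound on the posterior over $\Theta$ after pre-training, where the KL term naturally scales with $\log(|\mathcal V|\cdot|\mathcal R|)$ through the vocabulary of output logits; this route sidesteps the non-i.i.d.\ issue at the cost of introducing a prior-dependent constant absorbed into the $\mathcal{O}(\cdot)$.
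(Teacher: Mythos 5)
There is a genuine gap at the heart of your argument: the application of Massart's finite-class lemma with $|\mathcal{V}|\cdot|\mathcal{R}|$ as the class cardinality is invalid. Massart's lemma bounds the Rademacher average of a \emph{finite set of achievable loss vectors} $\{(\ell(\Theta;z_1),\dots,\ell(\Theta;z_N)):\Theta\}\subset\mathbb{R}^N$, i.e.\ the restriction of the hypothesis class to the sample. The quantity $|\mathcal{V}|\cdot|\mathcal{R}|$ counts distinct \emph{query contexts} — inputs, not hypotheses. Since $\Theta$ ranges over a continuum, the restricted class is not finite, and the number of possible inputs gives you no control over the supremum over $\Theta$ inside the Rademacher average; "treating query contexts as atoms of the hypothesis index set" is a category error. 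If you carry the Rademacher route out honestly (symmetrization, contraction through the Lipschitz loss, then a covering-number or norm-based bound for the two coupled message-passing networks), the complexity term you obtain scales with products of weight-matrix spectral norms, the depth $L$, and the embedding dimension $d$ — not with $\log(|\mathcal{V}|\cdot|\mathcal{R}|)$. The sound parts of your plan are the symmetrization step, the contraction argument, the McDiarmid upgrade, and especially your identification of the non-i.i.d.\ obstacle, which is real and which the paper does not address at all.

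For comparison, the paper's own proof takes a parameter-counting route: it asserts $\mathrm{VC}(\mathcal{H})=\mathcal{O}(p\log p)$ with $p\propto|\mathcal{R}|\,d^2\,L$, plugs this into the standard $\sqrt{\mathrm{VC}(\mathcal{H})/N}$ bound to get $\mathcal{O}\bigl(\sqrt{|\mathcal{R}|\,d^2\,L\,\log(|\mathcal{R}|\,d^2\,L)/N}\bigr)$, and then "simplifies" to the stated form by declaring $d$ and $L$ fixed and $|\mathcal{R}|$ bounded by the KG size. That final step is itself lossy — a factor linear in $|\mathcal{R}|$ does not reduce to $\log(|\mathcal{V}|\cdot|\mathcal{R}|)$, and $|\mathcal{V}|$ never actually enters the paper's derivation — so the stated bound does not legitimately follow from either route. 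Your instinct that a finite class of size $|\mathcal{V}|\cdot|\mathcal{R}|$ must be lurking behind the theorem is understandable given its form, but to produce it you would need to quantize or discretize the hypothesis space down to that cardinality (or, in your PAC-Bayes alternative, justify why the KL term scales like $\log(|\mathcal{V}|\cdot|\mathcal{R}|)$ rather than like the parameter count), and neither your proposal nor the paper supplies that step.
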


\begin{proof}
Let $\mathcal{H}$ be the hypothesis class of all possible \textsc{GraphOracle} models with fixed architecture. The VC-dimension of $\mathcal{H}$ can be bounded by $\mathcal{O}(p \log p)$, where $p$ is the number of parameters in the model, which is proportional to $|\mathcal{R}| \cdot d^2 \cdot L$.

By standard results from statistical learning theory, the generalization error is bounded by:
\begin{equation}
    \mathbb{E}[\mathcal{L}_{test}(\Theta) - \mathcal{L}_{train}(\Theta)] \leq \mathcal{O}\left(\sqrt{\frac{VC(\mathcal{H})}{N}}\right)
\end{equation}

Substituting our bound on the VC-dimension:
\begin{equation}
\resizebox{0.49\textwidth}{!}{$
    \mathbb{E}[\mathcal{L}_{test}(\Theta) - \mathcal{L}_{train}(\Theta)] \leq \mathcal{O}\left(\sqrt{\frac{|\mathcal{R}| \cdot d^2 \cdot L \cdot \log(|\mathcal{R}| \cdot d^2 \cdot L)}{N}}\right)
    $}
\end{equation}

Since $d$ and $L$ are fixed hyperparameters of the model, and $|\mathcal{R}|$ is bounded by the KG size, we can simplify this to:
\begin{equation}
    \mathbb{E}[\mathcal{L}_{test}(\Theta) - \mathcal{L}_{train}(\Theta)] \leq \mathcal{O}\left(\sqrt{\frac{\log(|\mathcal{V}| \cdot |\mathcal{R}|)}{N}}\right)
\end{equation}

This completes the proof.
\end{proof}

\begin{theorem}[Inductive Generalization]
Let $\mathcal{G}_{train} = (\mathcal{V}_{train}, \mathcal{R}_{train}, \mathcal{F}_{train})$ and $\mathcal{G}_{test} = (\mathcal{V}_{test}, \mathcal{R}_{test}, \mathcal{F}_{test})$ be training and testing KGs. If the relation structures are similar, i.e., $d_{TV}(\mathcal{G}^{\mathcal{R}}_{train}, \mathcal{G}^{\mathcal{R}}_{test}) \leq \epsilon$, then the generalization error is bounded by:
\begin{equation}
    \mathcal{L}_{test}(\Theta) - \mathcal{L}_{train}(\Theta) \leq \mathcal{O}(\epsilon + \sqrt{\frac{\log(|\mathcal{V}_{train}| \cdot |\mathcal{R}_{train}|)}{N}})
\end{equation}
where $d_{TV}$ is the total variation distance between the relation graphs.
\end{theorem}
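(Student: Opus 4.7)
\section*{Proof Proposal for the Inductive Generalization Bound}

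The plan is to decompose the quantity $\mathcal{L}_{test}(\Theta) - \mathcal{L}_{train}(\Theta)$ in the classical domain‑adaptation style, following the template of Ben--David et al., and then bound each piece separately using the machinery already available in the paper. Concretely, I introduce an intermediate quantity $\mathcal{L}^{\star}(\Theta)$, the expected loss when the model (trained on $\mathcal{G}_{train}$) is evaluated on triples that are scored using the \emph{training} RDG but sampled from the \emph{test} distribution. Then
\begin{equation}
\mathcal{L}_{test}(\Theta) - \mathcal{L}_{train}(\Theta)
   = \underbrace{\bigl[\mathcal{L}_{test}(\Theta) - \mathcal{L}^{\star}(\Theta)\bigr]}_{\text{(A): representation shift}}
   + \underbrace{\bigl[\mathcal{L}^{\star}(\Theta) - \mathcal{L}_{train}(\Theta)\bigr]}_{\text{(B): finite-sample gap}}.
\end{equation}
Term (B) is a standard in‑distribution generalization gap and is handled directly by Theorem~3 of the paper, giving the $\mathcal{O}\bigl(\sqrt{\log(|\mathcal{V}_{train}|\cdot|\mathcal{R}_{train}|)/N}\bigr)$ contribution. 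The work therefore concentrates on bounding term (A) by $\mathcal{O}(\epsilon)$.

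To control (A), I would first establish a Lipschitz property of \textsc{GraphOracle} with respect to the underlying RDG. Because each layer in Eq.~(5) is a composition of a bounded attention softmax, a linear map with spectrally bounded $\bm W^{\ell,h}_1,\bm W^{\ell,h}_2$, and a $1$‑Lipschitz activation $\sigma$, the map from RDG adjacency to the final relation embedding $\bm h^{L_r}_{r\mid r_q}$ is Lipschitz with some constant $C_r$ depending on $L_r$, $H$, and the spectral norms of the weights. A second Lipschitz argument, applied to the entity‑level recursion in Eq.~(7)--(8), propagates this to the final scores $s(e_q,r_q,e_a)$ with overall constant $C=C_eC_r$. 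Since the contrastive loss in Eq.~(2) uses a bounded sigmoid composed with these scores, it is $C'$‑Lipschitz in the RDG structure for a universal constant $C'$.

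Next I would invoke the coupling characterization of total variation: $d_{TV}(\mathcal{G}^{\mathcal{R}}_{train},\mathcal{G}^{\mathcal{R}}_{test})\le \epsilon$ means there exists a joint distribution over pairs of RDGs whose marginals match the two graphs and whose disagreement probability is at most $\epsilon$. Under this coupling, the expected difference of losses is bounded by the Lipschitz constant times the expected discrepancy in the message‑passing inputs, which in turn is at most $\epsilon$ times a bounded quantity (the maximum per‑edge contribution to the embedding). This yields
\begin{equation}
\bigl|\mathcal{L}_{test}(\Theta) - \mathcal{L}^{\star}(\Theta)\bigr| \;\le\; C'\cdot B \cdot \epsilon \;=\; \mathcal{O}(\epsilon),
\end{equation}
where $B$ absorbs the boundedness of embeddings and attention weights. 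Combining the two contributions gives the claimed bound.

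The main obstacle I anticipate is controlling the Lipschitz constant $C$ so that it does not blow up exponentially with $L_r$ and $L_e$; a naive product of per‑layer Lipschitz constants would make the $\mathcal{O}(\epsilon)$ term vacuous. To circumvent this I would exploit the softmax normalization in Eq.~(6), which keeps the effective aggregation operator a convex combination with operator norm $1$, together with the weight decay in the training objective $\mathcal{L}^{(k)}$, which gives a data‑dependent bound on $\|\bm W^{\ell,h}\|_2$. A secondary subtlety is that $\mathcal{V}_{test}$ and $\mathcal{R}_{test}$ may contain genuinely new symbols, but since \textsc{GraphOracle}'s parameters are defined only over RDG topology (the \textsc{INDICATOR} initialization in Eq.~(5) is vocabulary‑free), the Lipschitz argument transfers cleanly across vocabularies, which is precisely the property the theorem is meant to formalize.
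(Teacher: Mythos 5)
Your top-level decomposition is the same as the paper's: both split $\mathcal{L}_{test}(\Theta)-\mathcal{L}_{train}(\Theta)$ into a distribution-shift term to be bounded by $\mathcal{O}(\epsilon)$ and a finite-sample term inherited from the preceding generalization-error theorem. The differences are in the middle term and in how much of the shift term is actually argued. The paper's pivot is $\mathcal{L}^{*}_{test}(\Theta)$, described as ``the expected loss under the optimal parameter setting for the test graph'' (a somewhat incoherent definition, since it still takes the trained $\Theta$ as argument), and the $\mathcal{O}(\epsilon)$ bound on the first term is simply asserted ``based on the similarity assumption'' with no Lipschitz or coupling argument at all. Your pivot --- the trained model evaluated on test-distributed queries but scored through the training RDG --- is cleaner, and your term (A) is the only place in either write-up where the $\epsilon$ actually gets connected to the model: the coupling characterization of $d_{TV}$ composed with a Lipschitz map from RDG structure to loss is exactly the missing lemma. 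You also correctly flag the two real obstructions. First, the Lipschitz constant: the paper's own Theorem on Lipschitz continuity gives $K=\prod_{\ell}\lVert \bm W^{\ell}\rVert_2\cdot L_{\sigma}^{L}$, precisely the exponential-in-depth product you worry about, so your appeal to softmax convexity and weight decay is doing necessary work that the paper never does. Second, $d_{TV}$ between two \emph{graphs} is never defined in the paper; your reading of it as a coupling of distributions over RDGs is a repair, not a restatement. One caveat on your term (B): it is not literally the in-distribution gap of the previous theorem, because $\mathcal{L}^{\star}$ samples queries from the test distribution; you would need an extra step (e.g., absorbing the query-marginal shift into the same $\epsilon$ budget via a second coupling) to make that identification honest. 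In short, same skeleton, but your proposal supplies the substantive content for the step the paper hand-waves, and it would need the term-(B) patch above to be complete.
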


\begin{proof}
We decompose the generalization error into two components:
\begin{equation}
\resizebox{0.49\textwidth}{!}{$
    \mathcal{L}_{test}(\Theta) - \mathcal{L}_{train}(\Theta) = [\mathcal{L}_{test}(\Theta) - \mathcal{L}^*_{test}(\Theta)] + [\mathcal{L}^*_{test}(\Theta) - \mathcal{L}_{train}(\Theta)]
    $}
\end{equation}

where $\mathcal{L}^*_{test}(\Theta)$ is the expected loss under the optimal parameter setting for the test graph.

The first term represents the approximation error due to structural differences between train and test graphs, which is bounded by $\mathcal{O}(\epsilon)$ based on the similarity assumption.

The second term is the standard generalization error from the previous theorem.

Combining these bounds completes the proof.
\end{proof}

\begin{figure*}[ht]
  \centering
    \centering
    \includegraphics[width=\textwidth]{./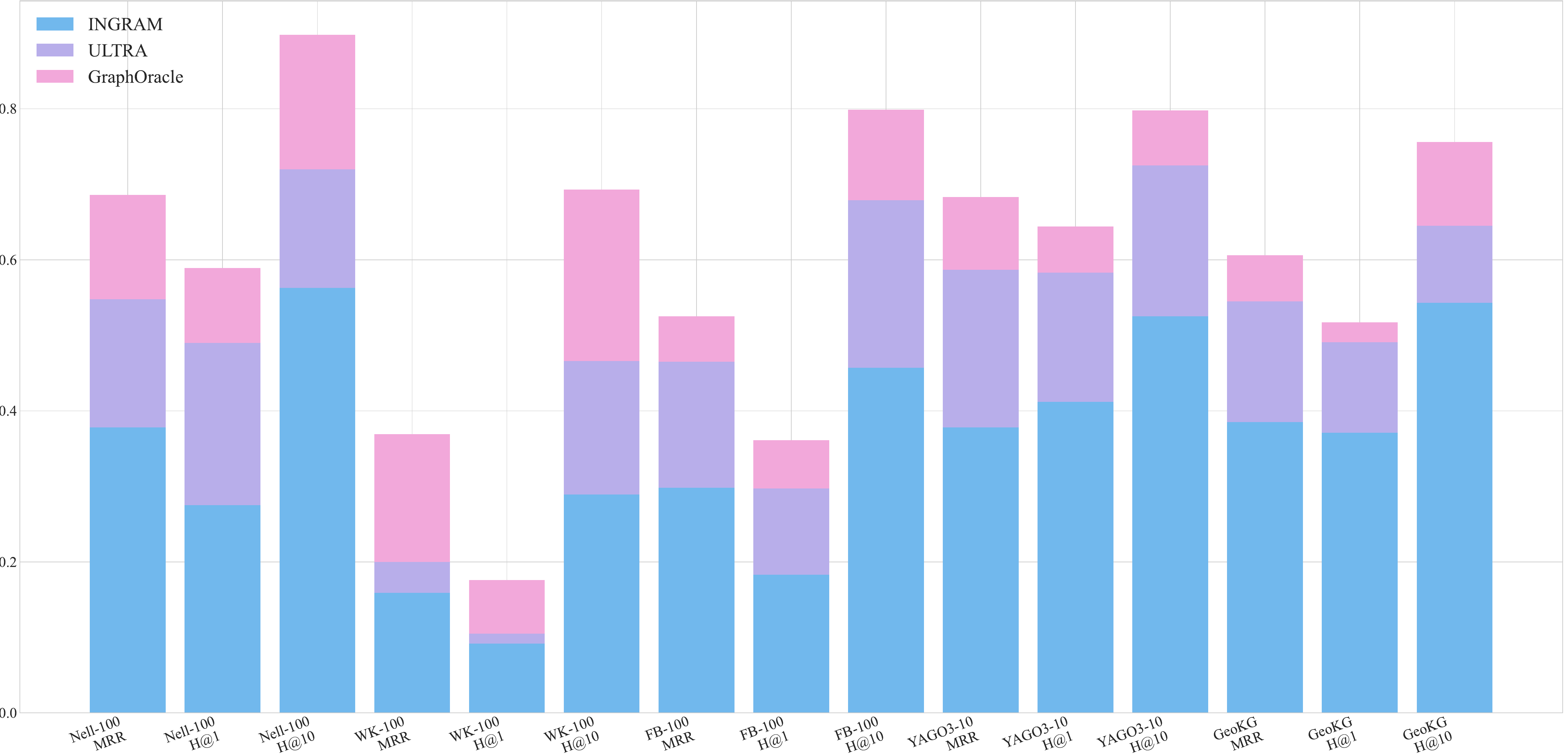}
  %{-8pt}
  \caption{Comparison of the effects of building relationship graphs using different methods}
  \label{fig:different_relation_graph}
\end{figure*}

\section*{J  Comparison of \textsc{GraphOracle} with Supervised SOTA Methods} \label{Relationship diagram construction comparison}

\begin{table}[th]
\centering
\caption{The number of edges in the relation graph constructed by INGRAM, ULTRA, and \textsc{GraphOracle}.}
\label{tab:relgraph_edges}
\resizebox{0.5\textwidth}{!}{
\begin{tabular}{l|cccc}
\toprule
\textbf{Dataset} &\textbf{\# Relation} &\textbf{INGRAM} & \textbf{ULTRA} & \textbf{\textsc{GraphOracle}} \\
\midrule
NL-25  & 146  & 1610 & 2300 & 797 \\
NL-50  & 150  &  1748 & 2526 & 861 \\
NL-75  & 138  &  1626 & 2336 & 787 \\
NL-100 & 99  &   892 & 1159 & 416 \\
WK-25  & 67  &   598 &  947 & 256 \\
WK-50  & 102  &  1130 & 2164 & 508 \\
WK-75  & 77  &  732 & 1253 & 313 \\
WK-100 & 103  &  1052 & 1695 & 460 \\
FB-25  & 233  &  7172 & 10479 & 3501 \\
FB-50  & 228  &  6294 &  9300 & 3135 \\
FB-75  & 213  &  5042 &  7375 & 2524 \\
FB-100 & 202  &  4058 &  5728 & 2017 \\
\midrule
WN\_V1 & 9  &    48 &    40 &   37 \\
WN\_V2 & 10  &   76 &    76 &   55 \\
WN\_V3 & 11  &    94 &    85 &   68 \\
WN\_V4 & 9  &    70 &    61 &   54 \\
FB\_V1 & 180  &  1622 & 2416 &  712 \\
FB\_V2 & 200  &  2692 & 4050 & 1237 \\
FB\_V3 & 215  &  3398 & 5015 & 1640 \\
FB\_V4 & 219  &  4624 & 7036 & 2231 \\
NL\_V1 & 14  &   122 &  170 &   51 \\
NL\_V2 & 88  &  1574 & 2065 &  842 \\
NL\_V3 & 142  &  1942 & 2558 & 1017 \\
NL\_V4 & 76  &  1296 & 1657 &  744\\
\bottomrule
\end{tabular}
}
\label{number_of_edge}
\end{table}

\begin{figure*}[!ht]
  \centering
  \begin{subfigure}[b]{0.24\textwidth}
    \centering
    \includegraphics[width=\textwidth,height=2.9cm]{./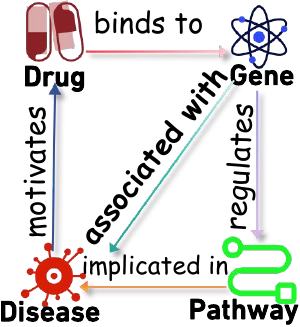}
    \caption{KG1}
  \end{subfigure}%
  \begin{subfigure}[b]{0.24\textwidth}
    \centering
    \includegraphics[width=\textwidth,height=2.9cm]{./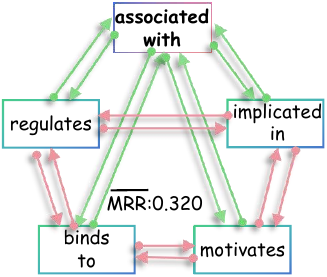}
    \caption{INGRAM}
  \end{subfigure}%
  \begin{subfigure}[b]{0.24\textwidth}
    \centering
    \includegraphics[width=\textwidth,height=2.9cm]{./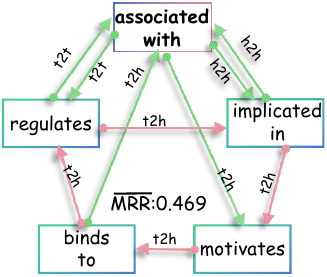}
    \caption{ULTRA}
    \label{fig:rel_ultra}
    \end{subfigure}%
  \begin{subfigure}[b]{0.24\textwidth}
    \centering
    \includegraphics[width=\textwidth,height=2.9cm]{./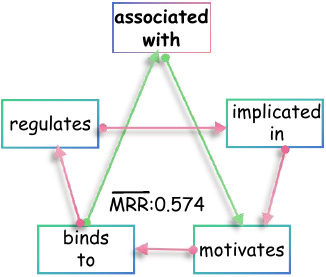}
    \caption{\textsc{GraphOracle}}
    \label{fig:rel_graphoracle}
    \end{subfigure}  \\

    \begin{subfigure}[b]{0.24\textwidth}
    \centering
    \includegraphics[width=\textwidth,height=2.9cm]{./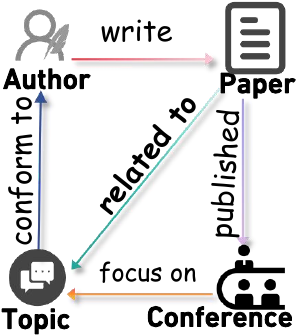}
    \caption{KG2}
  \end{subfigure}%
  \begin{subfigure}[b]{0.24\textwidth}
    \centering
    \includegraphics[width=\textwidth,height=2.9cm]{./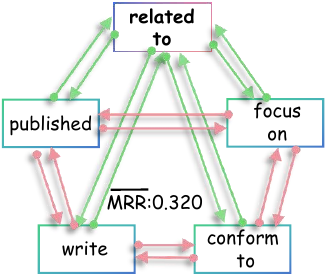}
    \caption{INGRAM}
  \end{subfigure}%
  \begin{subfigure}[b]{0.24\textwidth}
    \centering
    \includegraphics[width=\textwidth,height=2.9cm]{./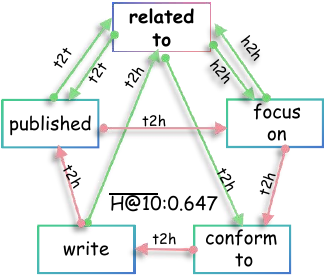}
    \caption{ULTRA}
    \end{subfigure}%
  \begin{subfigure}[b]{0.24\textwidth}
    \centering
    \includegraphics[width=\textwidth,height=2.9cm]{./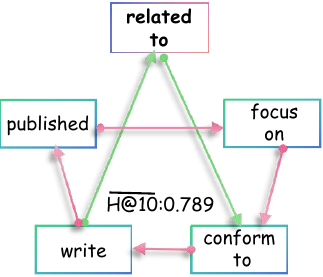}
    \caption{\textsc{GraphOracle}}
    \end{subfigure}  \\

  %{-8pt}

  \caption{Comparison of \textsc{GraphOracle}'s graph construction with other methods}
  \label{fig:kg_relation_comparison2}
\end{figure*}

\begin{figure}[ht]
  \centering
  \begin{subfigure}[b]{0.45\textwidth}
    \centering
    \includegraphics[width=\textwidth,height=4cm]{./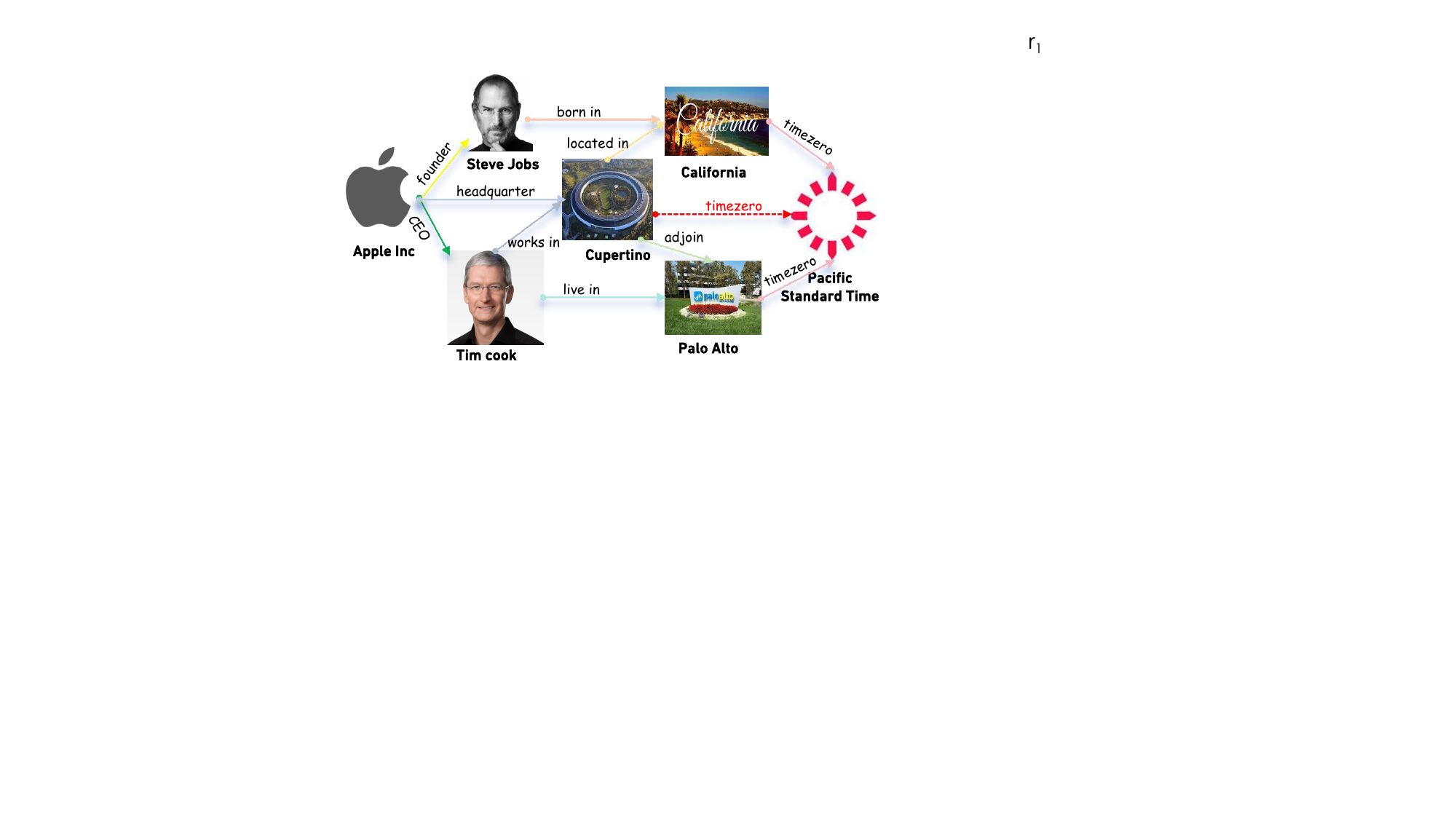}
    \caption{Knowledge Graph}
    \label{Fig:Knowledge Graph}
  \end{subfigure}%
\hspace{0.03\textwidth}
    \begin{subfigure}[b]{0.45\textwidth}
    \centering
    \includegraphics[width=\textwidth,height=4cm]{./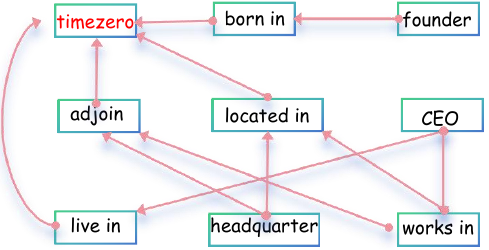}
    \caption{Relation Graph}
    \label{Fig:Relation Graph}
    \end{subfigure}% 

  %{-8pt}
  \caption{Illustration of GraphOracle's Relation-Dependency Graph Construction}
  \label{fig:kg_relation_comparison}
\end{figure}

Fig.~\ref{fig:kg_relation_comparison2} illustrates the comparative transfer learning performance (KG1 $\rightarrow$ KG2) of \textsc{GraphOracle}, INGRAM, and ULTRA\footnote{Fig.~\ref{fig:kg_relation_comparison} shows the visualization process of \textsc{GraphOracle} extracting RDG.}. Unlike INGRAM, which connects every pair of relations sharing an entity--thus producing an undirected $\mathcal{O}(|\mathcal{R}|^{2})$ co-occurrence graph with indiscriminate propagation that ignores directional dependencies--and ULTRA, which subdivides those links into four fixed head/tail interaction patterns (head-to-head, head-to-tail, tail-to-head, tail-to-tail) but still incurs quadratic growth, \textsc{GraphOracle} constructs a far sparser Relation-Dependency Graph (RDG) by keeping only directed precedence edges mined from two-hop relational motifs, reducing the edge count to $\mathcal{O}(|\mathcal{R}|\!\cdot\!\bar{d})$. These precedence edges impose an explicit partial order so that information propagates hierarchically from prerequisite to consequent relations, enabling the capture of high-order global dependencies that the local structures of its competitors overlook. Furthermore, while ULTRA directly applies NBFNet's method for entity and relation representation without specialized relation processing (limiting it to local relation structures), \textsc{GraphOracle} implements a query-conditioned multi-head attention mechanism that traverses the RDG to produce context-specific relation embeddings. This economical yet expressive design suppresses noise, lowers computational cost, and sustains both efficiency and accuracy as the relation set expands, explaining \textsc{GraphOracle}'s consistent superiority in transfer learning across real-world knowledge graphs.

We conducted a controlled experiment to isolate the impact of relation graph construction by replacing \textsc{GraphOracle}'s construction method with those of ULTRA and INGRAM, while maintaining identical message passing mechanisms. The results in Fig.~\ref{fig:different_relation_graph} demonstrate that \textsc{GraphOracle}'s relation-dependency graph construction yields consistently superior performance across all datasets and metrics. This empirically validates our theoretical claim that \textsc{GraphOracle} more effectively captures essential compositional relation patterns while filtering out spurious connections that introduce noise into the reasoning process. Notably, while the INGRAM and ULTRA graph construction variants underperform compared to \textsc{GraphOracle}, they still outperform their respective original message passing implementations—further confirming the effectiveness of our attention-based message propagation scheme. The efficiency advantage is quantitatively substantial: as shown in Table~\ref{number_of_edge}, \textsc{GraphOracle} generates significantly fewer edges (often 50-60\% fewer) than ULTRA and INGRAM across all benchmark datasets. This reduction in graph density translates directly to computational efficiency gains, with \textsc{GraphOracle} requiring proportionally less memory and computation during both training and inference phases. The performance improvements, coupled with this computational efficiency, demonstrate that \textsc{GraphOracle}'s approach to modeling relation dependencies fundamentally addresses the core challenge in knowledge graph foundation models: capturing meaningful compositional patterns without being overwhelmed by the combinatorial explosion of potential relation interactions.

\end{document}